\newtheorem{Them}{Theorem}[section]
\newtheorem{Prob}{Problem}[section]
\newtheorem{Def}{Definition}[section]
\newtheorem{Lemma}{Lemma}[section]
\newtheorem{Prop}{Proposition}[section]
\newtheorem{Assmp}{Assumption}[section]
\newcommand{\st}{\mathrm{s.t.}\quad }
\def \bQ{{\mathbf{Q}}}
\def \bx{{\mathbf{x}}}
\def \by{{\mathbf{y}}}
\DeclareMathOperator*{\argmax}{\arg\!\max}
\definecolor{Gray}{gray}{0.9}
\begin{document}
%
\title{Efficient Approximate Solutions to Mutual Information Based Global Feature Selection}
\author{
	\IEEEauthorblockN{
		Hemanth Venkateswara\IEEEauthorrefmark{1},
		Prasanth Lade\IEEEauthorrefmark{2},
		Binbin Lin\IEEEauthorrefmark{3}, 
		Jieping Ye\IEEEauthorrefmark{3} and
		Sethuraman Panchanathan\IEEEauthorrefmark{1}
	}
	\IEEEauthorblockA{\IEEEauthorrefmark{1}Arizona State University, Tempe, AZ, Emails: \{hemanthv, panch\}@asu.edu}
	\IEEEauthorblockA{\IEEEauthorrefmark{2}Bosch Research and Technology Center, Palo Alto, CA, Email: prasanth.lade@us.bosch.com}
	\IEEEauthorblockA{\IEEEauthorrefmark{3}University of Michigan, Ann Arbor, MI, Emails: binbin.lin@asu.edu and jpye@umich.edu}
}

\maketitle
\begin{abstract}
Mutual Information (MI) is often used for feature selection when developing classifier models. 
Estimating the MI for a subset of features is often intractable. 
We demonstrate, that under the assumptions of conditional independence, MI between a subset of features can be expressed as 
the Conditional Mutual Information (CMI) between pairs of features.
But selecting features with the highest CMI turns out to be a hard combinatorial problem. 
In this work, we have applied two unique global methods, Truncated Power Method (TPower) and Low Rank Bilinear Approximation (LowRank), 
to solve the feature selection problem. 
These algorithms provide very good approximations to the NP-hard CMI based feature selection problem.   
We experimentally demonstrate the effectiveness of these procedures across multiple datasets and 
compare them with existing MI based global and iterative feature selection procedures.
\end{abstract}


%
\IEEEpeerreviewmaketitle

\vspace*{-2mm}
\section{Introduction}
\vspace*{-2mm}
High dimensional data can pose a significant challenge to 
learning methods due to the curse of dimensionality \cite{hastie2009elements}. 
Feature selection is a prominent dimensionality reduction technique that selects a small subset of features 
based on certain relevancy criteria. Apart from reducing data dimensionality, 
feature selection provides insights into the data, prevents over-fitting and reduces computational 
costs for learning, which ultimately results in better learned models. \\
Depending on whether there is label information available, feature selection can be classified into two categories: 
supervised and unsupervised. Supervised feature selection procedures are broadly classified into three groups, \textit{wrapper}, \textit{filter} 
and \textit{embedded} methods \cite{kohavi1997wrappers}. 
Wrapper procedures select features for a specific learning model. 
Filter methods on the other hand are classifier agnostic. 
Feature selection and model learning are treated as two separate steps. These procedures rely on statistical 
characteristics of the data such as correlation, distance and information, to select the most important features. 
Embedded procedures incorporate feature selection as part of the learning model, as seen in neural nets. 
We focus on the model-independent filter procedures for feature selection, because of their classifier 
independence, simplicity and computational efficiency \cite{guyon2003introduction}. 
Specifically, we consider Mutual Information (MI) based criteria for feature selection. 
MI is a probabilistic measure that captures the `correlation' between random variables (see Figure (\ref{Fig1})). 
Whereas standard correlation captures linear relationships between variables, 
MI can capture non-linear dependencies between variables \cite{rodriguez2010quadratic}. \\
Since our aim is to develop better classifier models using feature selection, 
we select the best subset of features that together have the highest 
MI with the class variable. Estimating MI between a subset 
of features requires the estimation of high dimensional joint probability distributions, which in 
turn necessitates exponentially large amounts of data. We circumvent this hurdle by invoking the 
assumption of conditional independence between the features. This reduces the MI estimation problem to a 
Conditional Mutual Information (CMI) estimation problem, involving three features at a time. 
However, selecting the subset of features with the highest CMI turns out to be a very hard 
combinatorial problem. We model feature selection as a Binary Quadratic Problem (BQP), which is NP-hard. 
We introduce approximations to solve the BQP taking inspiration from solutions to other 
related NP-hard problems like $k$-Sparse-PCA and Densest-$k$-Subgraph. 
We also evaluate our procedure by comparing it with less optimal, but computationally more efficient 
iterative procedures for feature selection. We evaluate our feature selection using classification accuracies 
across multiple datasets. \\
\noindent\emph{Motivation}: 
Theoretical underpinnings of using CMI for feature selection have not been clearly outlined in literature. 
Also, current global feature selection methods like Semidefinite Programming (SDP) do not scale well. 
We therefore aim to provide better insights into the approximations and assumptions involved in CMI based feature selection,
and introduce efficient approximations to solving the NP-hard problem 
of feature selection from related problems. \\
\noindent\emph{Contributions}: 
We have demonstrated that the class posterior distribution can be approximated by selecting a subset of variables with the highest MI with the class variable. 
We have proved that the MI between a subset of variables and the class random variable, reduces to CMI between pairs of 
variables under the assumptions of conditional and class-conditional independence. 
We have modeled the NP-hard problem of feature selection as a Binary Quadratic Problem (BQP) and demonstrated our feature selection method across multiple datasets. \\
The rest of our paper is organized as follows. 
In Sec. \ref{model:sec}, we outline the problem of feature selection and formulate the BQP and its approximate solutions. 
Sec. \ref{exist:sec} compares our work with existing feature selection procedures. 
We conclude with experiments in Sec. \ref{expts:sec} followed by discussion in Sec. \ref{discuss:sec}. 

\begin{figure}[!t]
\begin{mdframed}[backgroundcolor=gray!20, roundcorner=10pt, linecolor=blue, linewidth=1pt] 
\begin{center}
  \includegraphics[width=\textwidth]{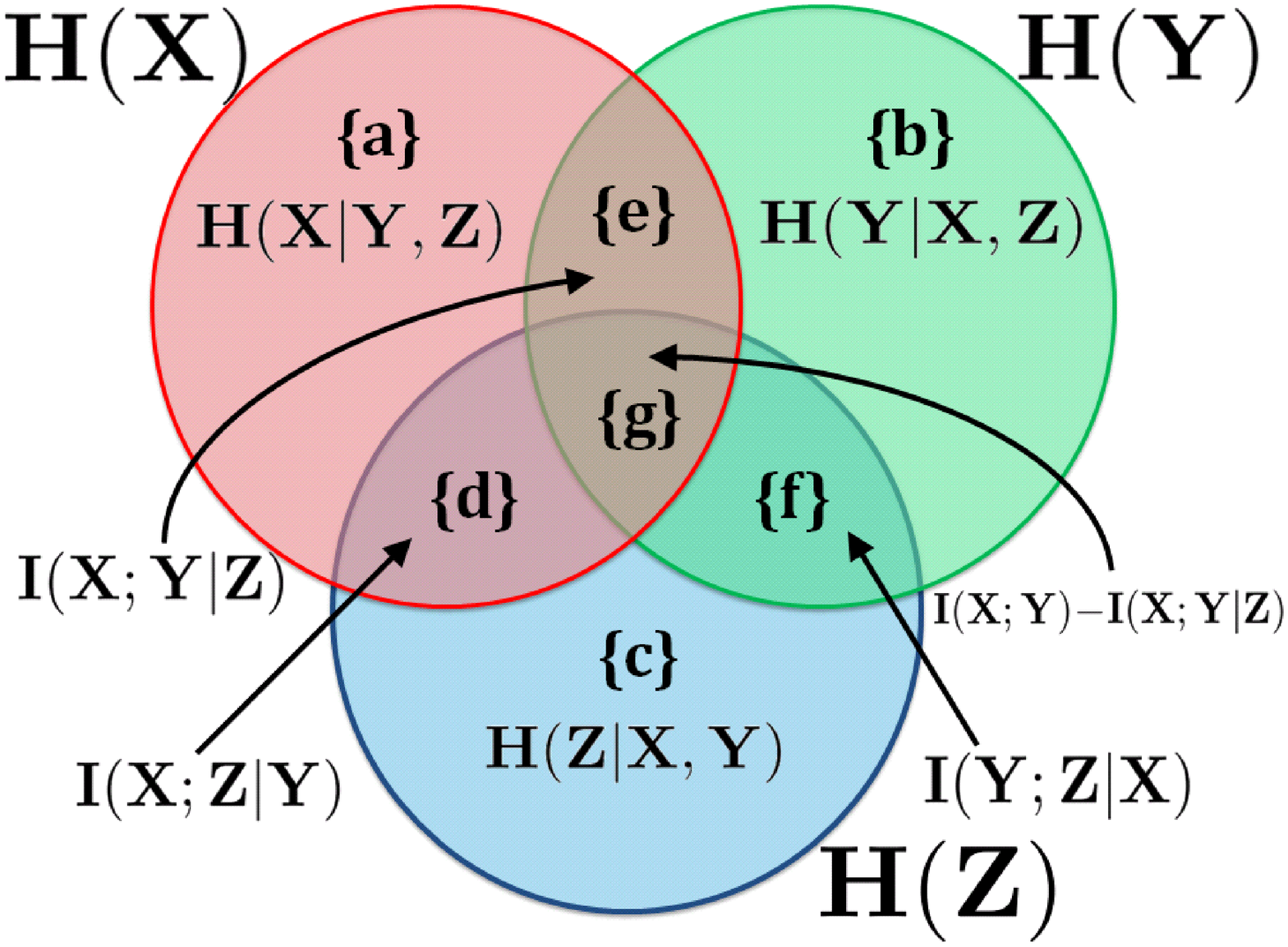}
  \captionof{figure}{Venn diagram depicting entropy interaction. H(X) = \{a,e,g,d\}, H(Y)=\{b,e,g,f\}, 
  H(Z)=\{c,d,g,f\}, I(X;Y)=\{e,g\}, I(X;Z) = \{d,g\}, I(Y;Z) = \{g,f\}, H(X,Y,Z) = \{a,b,c,d,e,f,g\}\\}
  \label{Fig1}
\end{center}
\vspace*{-2mm}
\textbf{\\Information Theory Basics}: 
Information of a random variable $X$ is given by $I(X) = -\log p(X)$. 
Entropy $H(X)$, characterizes the uncertainty about the random variable $X$. 
It is the expected information content for a random variable $X$ with,  
$H(X) = \mathbb{E}[I(X)] = -\sum_{\mathbf{x}}p(X=\mathbf{x})\log p(X=\mathbf{x})$. 
Mutual Information(MI) between two random variables $X$ and $Y$, is a measure of information shared between 
them and is represented as $I(X;Y)$. 
It is symmetric with $I(X;Y) = I(Y;X)$. 
In terms of entropy, MI is defined as $I(X;Y) = H(X) - H(X|Y)$, where $H(X|Y)$ is the conditional entropy. 
MI between random variables $X$ and $Y$ can also be understood as the reduction in entropy of $X$ (or $Y$) 
due to the presence of $Y$ (or $X$). 
Conditional Mutual Information(CMI) denoted as $I(X;Y|Z)$, is the expected MI of two random variables 
$X$ and $Y$, given a third random variable $Z$. 
\end{mdframed}
\vspace*{-4mm}
\end{figure}

\vspace*{-2mm}
\section{Feature Selection using Conditional Mutual Information}
\vspace*{-2mm}
\label{model:sec}
We consider the standard 
classification setting where we are presented with i.i.d data $\mathscr{D}=\{(\mathbf{x}^i,y^i); i=1\ldots m\}$.  
Each data point $\mathbf{x}^i \in \mathbb{R}^n$, is regarded as an instance of a set of $n$ 
continuous random variables $X = \{X_1, X_2,\ldots,X_n\}$. 
We interchangeably use the terms \textit{features} or \textit{variables} to denote $X$ or its subsets. 
The dependent class label $y^i$, is considered to be an instance of a discrete random variable $Y$, that 
takes values in $[1,\ldots,c]$. We define $\mathbb{S}$ to be a subset of $k$ feature indices, where 
$1\leq k\leq n$ and $X_{\mathbb{S}}$, the subset of $k$ features indexed by $\mathbb{S}$. Likewise, 
$\mathbb{\tilde{S}}$ is the subset of left over indices, i.e. $\{1,\ldots,n\}\backslash \mathbb{S}$ and 
$X_{\mathbb{\tilde{S}}}$ is the subset of leftover $(n-k)$ features indexed by $\mathbb{\tilde{S}}$, 
i.e. $X\backslash X_{\mathbb{S}}$. 
We consider $p(Y|X)$, to be the posterior probability on the dataset $\mathscr{D}$. We now state our problem. 
\begin{Prob}
To estimate the optimal subset $\mathbb{S}^* \subset \{1,\ldots, n\}$, of feature indices, such that 
$p(Y|X_{\mathbb{S}^*})$ is a good approximation to $p(Y|X)$.
\end{Prob}
\begin{Them}
\label{MI:Th}
$p(Y|X) = p(Y|X_{\mathbb{S}})$, if and only if Mutual Information $I(X;Y) = I(X_\mathbb{S};Y)$
\end{Them}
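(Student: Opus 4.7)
The plan is to reduce both sides of the biconditional to the same conditional independence statement, namely $Y \perp X_{\tilde{\mathbb{S}}} \mid X_{\mathbb{S}}$, and thereby identify them with one another. The main tool is the chain rule for mutual information applied to the decomposition $X = (X_{\mathbb{S}}, X_{\tilde{\mathbb{S}}})$.

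First I would write the chain-rule identity
\begin{equation*}
I(X;Y) \;=\; I(X_{\mathbb{S}}, X_{\tilde{\mathbb{S}}};Y) \;=\; I(X_{\mathbb{S}};Y) + I(X_{\tilde{\mathbb{S}}};Y \mid X_{\mathbb{S}}),
\end{equation*}
so that the information-theoretic equality $I(X;Y) = I(X_{\mathbb{S}};Y)$ is equivalent to $I(X_{\tilde{\mathbb{S}}};Y \mid X_{\mathbb{S}}) = 0$. Since conditional mutual information is a nonnegative quantity (an expected KL divergence), vanishing of the CMI is equivalent to the conditional independence $Y \perp X_{\tilde{\mathbb{S}}} \mid X_{\mathbb{S}}$, i.e.\ $p(Y, X_{\tilde{\mathbb{S}}} \mid X_{\mathbb{S}}) = p(Y \mid X_{\mathbb{S}})\, p(X_{\tilde{\mathbb{S}}} \mid X_{\mathbb{S}})$.

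Next I would translate the probabilistic side into the same conditional independence. Dividing $p(Y, X_{\tilde{\mathbb{S}}} \mid X_{\mathbb{S}})$ by $p(X_{\tilde{\mathbb{S}}} \mid X_{\mathbb{S}})$ gives $p(Y \mid X_{\mathbb{S}}, X_{\tilde{\mathbb{S}}}) = p(Y \mid X)$ on the left, so the independence factorization above is exactly the statement $p(Y \mid X) = p(Y \mid X_{\mathbb{S}})$. Conversely, if $p(Y \mid X) = p(Y \mid X_{\mathbb{S}})$, multiplying both sides by $p(X_{\tilde{\mathbb{S}}} \mid X_{\mathbb{S}})$ recovers the factorization. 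Chaining the two equivalences yields
\begin{equation*}
p(Y \mid X) = p(Y \mid X_{\mathbb{S}}) \;\Longleftrightarrow\; Y \perp X_{\tilde{\mathbb{S}}} \mid X_{\mathbb{S}} \;\Longleftrightarrow\; I(X;Y) = I(X_{\mathbb{S}};Y),
\end{equation*}
which is the desired biconditional.

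There is essentially no hard step: the argument is a bookkeeping exercise combining (i) the chain rule of MI, (ii) nonnegativity of CMI and its characterization of conditional independence, and (iii) manipulation of conditional distributions. The only mild subtlety is being careful that the conditional densities $p(\cdot \mid X_{\mathbb{S}})$ are well-defined wherever needed, which is standard under the implicit assumption that the relevant densities are strictly positive on the support of the data; I would note this as a measure-zero caveat rather than dwell on it.
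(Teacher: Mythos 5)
Your proposal is correct and follows essentially the same route as the paper: both reduce the statement, via the chain rule $I(X;Y)=I(X_{\mathbb{S}};Y)+I(X_{\tilde{\mathbb{S}}};Y\mid X_{\mathbb{S}})$, to the vanishing of the conditional mutual information and hence to the conditional independence $Y \perp X_{\tilde{\mathbb{S}}} \mid X_{\mathbb{S}}$. If anything, you are more careful than the paper on the converse direction, where the paper merely asserts that ``the statements are also true in reverse order'' while you explicitly supply the needed ingredient (nonnegativity of CMI and its characterization of conditional independence).
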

\vspace*{-2mm}
\begin{proof} Joint probability $p(X,Y)$ can be factored as
\begin{flalign}
p(X,Y) &= p(X)p(Y|X) = p(X)p(Y|X_{\mathbb{S}}) \notag\\
&= p(X_{\mathbb{\tilde{S}}})p(X_{\mathbb{S}}|X_{\mathbb{\tilde{S}}})p(Y|X_{\mathbb{S}}) \label{Th1:Eq2}
\end{flalign}
When viewed as elements in a Bayesian Network, the random variables can be thought to 
form a Markov Chain $X_{\mathbb{\tilde{S}}} \rightarrow X_{\mathbb{S}} \rightarrow Y$ 
where $Y$ is conditionally independent of $X_{\mathbb{\tilde{S}}}$ given $X_{\mathbb{S}}$. 
Therefore, the CMI $I(X_{\mathbb{\tilde{S}}};Y|X_{\mathbb{S}}) = 0$. 
With $I(X;Y) = I(X_\mathbb{S};Y) + I(X_{\mathbb{\tilde{S}}};Y|X_{\mathbb{S}})$, the necessary condition 
is proved. The statements in the proof are also true in the reverse order.
\end{proof}
Theorem(\ref{MI:Th}) provides the justification for choosing a 
MI based approach to determining the optimal $\mathbb{S}^*$. 
$I(X;Y)$ depends only on $\mathscr{D}$ and is a constant that is shared between $I(X_\mathbb{S};Y)$ 
and $I(X_{\mathbb{\tilde{S}}};Y|X_{\mathbb{S}})$. It is very likely that $I(X_{\mathbb{\tilde{S}}};Y|X_{\mathbb{S}})$ 
is a non-zero value for every $\mathbb{S}$ except for the most trivial case with 
$X_{\mathbb{\tilde{S}}} = \emptyset$. For a fixed value of $k$, we therefore try to estimate the optimal subset 
$\mathbb{S}$ that maximizes $I(X_{\mathbb{S}};Y)$. 

\vspace*{-2mm}
\subsection{The Binary Quadratic Problem}
\vspace*{-2mm}
Given a subset of features $X_\mathbb{S}$, in order to evaluate $I(X_\mathbb{S};Y)$, we need to estimate the 
joint probability distribution $p(X_\mathbb{S},Y)$, which is often intractable. 
To simplify the estimation of the joint distribution, 
we introduce the assumption of conditional independence between 
features in the spirit of Na\"{i}ve Bayes and \cite{brown2012conditional}. 
We first introduce a new term $\mathbb{S}_i$ as the subset $\mathbb{S}$ without the index $i$ also denoted 
as $\mathbb{S}\backslash\{i\}$. We now outline the assumption that will simplify the joint distribution. 
\begin{Assmp}
\label{cond_assmp}
For a set of selected features $\{X_{\mathbb{S}_i} \cup X_i\}$, 
the features $X_{\mathbb{S}_i}$ are conditionally independent and class-conditionally independent given $X_i$, 
i.e. $p(X_{\mathbb{S}_i}|X_i) = \prod_{j\in\mathbb{S}_i}p(X_j|X_i)$ and 
$p(X_{\mathbb{S}_i}|X_i,Y) = \prod_{j\in\mathbb{S}_i}p(X_j|X_i,Y)$. 
\end{Assmp}
\begin{Them}
\label{CMI:Th}
If Assumption \ref{cond_assmp} is true, and $X_i \in X_{\mathbb{S}}$ then, 
$I(X_{\mathbb{S}};Y) =  I(X_i;Y) + \sum_{j\in\mathbb{S}_i}I(X_j;Y|X_i)$
\end{Them}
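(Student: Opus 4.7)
The plan is to reduce the statement to a routine application of the chain rule for mutual information, followed by an entropy decomposition driven by the two independence conditions in Assumption \ref{cond_assmp}. The overall target is to show that the multi-feature mutual information $I(X_{\mathbb{S}};Y)$ splits additively into a ``pivot'' term $I(X_i;Y)$ and a sum of pairwise conditional mutual informations $I(X_j;Y|X_i)$ over the remaining features $j \in \mathbb{S}_i$.

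First I would apply the chain rule for mutual information to the partition $X_{\mathbb{S}} = \{X_i\} \cup X_{\mathbb{S}_i}$ to obtain $I(X_{\mathbb{S}};Y) = I(X_i;Y) + I(X_{\mathbb{S}_i};Y \mid X_i)$. This peels off the $I(X_i;Y)$ term for free and reduces the theorem to proving that $I(X_{\mathbb{S}_i};Y \mid X_i) = \sum_{j \in \mathbb{S}_i} I(X_j;Y \mid X_i)$, i.e.\ that the conditional mutual information between the block $X_{\mathbb{S}_i}$ and $Y$ given $X_i$ decomposes as a sum over individual features.

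Next I would rewrite $I(X_{\mathbb{S}_i};Y \mid X_i)$ using the entropy identity $I(A;B \mid C) = H(A \mid C) - H(A \mid B,C)$, yielding $I(X_{\mathbb{S}_i};Y \mid X_i) = H(X_{\mathbb{S}_i} \mid X_i) - H(X_{\mathbb{S}_i} \mid X_i, Y)$. This is where Assumption \ref{cond_assmp} does the real work: the first factorization $p(X_{\mathbb{S}_i} \mid X_i) = \prod_{j \in \mathbb{S}_i} p(X_j \mid X_i)$ implies $H(X_{\mathbb{S}_i} \mid X_i) = \sum_{j \in \mathbb{S}_i} H(X_j \mid X_i)$, and the class-conditional factorization $p(X_{\mathbb{S}_i} \mid X_i, Y) = \prod_{j \in \mathbb{S}_i} p(X_j \mid X_i, Y)$ similarly gives $H(X_{\mathbb{S}_i} \mid X_i, Y) = \sum_{j \in \mathbb{S}_i} H(X_j \mid X_i, Y)$. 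Subtracting term-by-term yields $\sum_{j \in \mathbb{S}_i} [H(X_j \mid X_i) - H(X_j \mid X_i, Y)] = \sum_{j \in \mathbb{S}_i} I(X_j;Y \mid X_i)$, completing the reduction.

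The only genuine subtlety, and the step I would be most careful about, is making sure both halves of Assumption \ref{cond_assmp} are needed and used correctly: the unconditional factorization $p(X_{\mathbb{S}_i}\mid X_i)$ controls $H(X_{\mathbb{S}_i}\mid X_i)$, while the class-conditional factorization $p(X_{\mathbb{S}_i}\mid X_i,Y)$ is what collapses the joint entropy $H(X_{\mathbb{S}_i}\mid X_i,Y)$ into a sum. Neither one alone suffices, since dropping either destroys one side of the entropy subtraction. Once both are in hand, the rest is essentially bookkeeping: stitching the chain rule step together with the entropy decomposition gives the claimed identity.
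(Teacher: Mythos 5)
Your proof is correct, and it reaches the paper's key intermediate identity $I(X_{\mathbb{S}};Y) = I(X_i;Y) + H(X_{\mathbb{S}_i}\mid X_i) - H(X_{\mathbb{S}_i}\mid X_i,Y)$ by a more direct route. The paper applies the chain rule in the opposite order, $I(X_{\mathbb{S}_i},X_i;Y) = I(X_{\mathbb{S}_i};Y) + I(X_i;Y\mid X_{\mathbb{S}_i})$, and then needs the extra identity $I(A;B\mid C)-I(A;B) = I(A;C\mid B)-I(A;C)$ plus an entropy expansion so that the terms $I(X_{\mathbb{S}_i};Y) - H(X_{\mathbb{S}_i}) + H(X_{\mathbb{S}_i}\mid Y)$ cancel; you avoid all of that by peeling off $X_i$ first, writing $I(X_{\mathbb{S}};Y) = I(X_i;Y) + I(X_{\mathbb{S}_i};Y\mid X_i)$ and immediately expanding the second term as $H(X_{\mathbb{S}_i}\mid X_i) - H(X_{\mathbb{S}_i}\mid X_i,Y)$. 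From that point on the two arguments coincide: both halves of Assumption \ref{cond_assmp} are invoked exactly as you describe, one factorization collapsing each conditional entropy into a sum. Your version is shorter and makes it more transparent why both independence conditions are needed; the paper's longer detour buys nothing here, and in fact your derivation yields a clean equality where the paper's equation (\ref{condMutInfoDer:Eq2}) somewhat inconsistently writes $\approx$ despite the theorem asserting equality under the stated assumption.
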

\vspace*{-2mm}
\begin{proof}
The MI between $X_\mathbb{S}$ and $Y$ is given by:
\begin{flalign}
I(X_{\mathbb{S}};Y) &= I(X_{\mathbb{S}_i},X_i;Y)\notag\\
&= I(X_{\mathbb{S}_i};Y) + I(X_i;Y|X_{\mathbb{S}_i})\notag \\
&= I(X_{\mathbb{S}_i};Y) + I(X_i;Y) \notag \\ 
&\quad- I(X_i;X_{\mathbb{S}_i}) + I(X_i;X_{\mathbb{S}_i}|Y)\notag \\
&= I(X_{\mathbb{S}_i};Y) + I(X_i;Y) - H(X_{\mathbb{S}_i}) + H(X_{\mathbb{S}_i}|X_i) \notag\\
&\quad+ H(X_{\mathbb{S}_i}|Y) - H(X_{\mathbb{S}_i}|X_i,Y)\notag \\
&= I(X_i;Y) + H(X_{\mathbb{S}_i}|X_i) - H(X_{\mathbb{S}_i}|X_i,Y) \label{condMutInfoDer:Eq1}
\end{flalign}
In the above derivation, we have first applied the MI Chain Rule $I(A,B;C) = I(A;C) + I(B;C|A)$. 
We have then applied the MI rule $I(A;B|C)-I(A;B) = I(A;C|B) - I(A;C)$ 
and expressed the two trailing MI terms in terms of entropy.  
Applying Assumption \ref{cond_assmp} to 
(\ref{condMutInfoDer:Eq1}), the high order 
entropy terms can be replaced with summations and reduced further using $H(X_j|X_i) - H(X_j|X_i,Y) = I(X_j,Y|X_i)$, to yield, 
\begin{flalign}
I(X_iX_{\mathbb{S}_i};Y) &\approx I(X_i;Y) + \sum_{j\in\mathbb{S}_i}H(X_j|X_i) \notag \\
&\quad- \sum_{j\in\mathbb{S}_i}H(X_j|X_i,Y)\notag\\
&=I(X_i;Y) + \sum_{j\in\mathbb{S}_i}I(X_j;Y|X_i)
\label{condMutInfoDer:Eq2}
\end{flalign}
\end{proof}
Since we would like to select $\mathbb{S}$ that maximizes (\ref{condMutInfoDer:Eq2}), we can formulate the global feature selection problem as,
\begin{flalign}
\mathbb{S} &= \argmax_{\{\mathbb{S} \mid X_{\mathbb{S}}\subset X\},|\mathbb{S}|=k} \sum_{i\in\mathbb{S}}\big[I(X_i;Y) + \sum_{j\in\mathbb{S}_i}I(X_j;Y|X_i)\big]
\end{flalign}
This is equivalent to the constrained Binary Quadratic problem,
\begin{flalign}
\max_{\mathbf{x}}\{\mathbf{x}^\top \mathbf{Q}\mathbf{x}\} ~~\text{s.t.} ~\mathbf{x}\in\{0,1\}^n, ~||\mathbf{x}||_1 = k,
\tag{BQP}
\label{condQ:Eq}
\end{flalign}
where, $\mathbf{Q}$ is a $[n \times n]$ non-negative matrix with $Q_{ii}$ = $I(X_i;C)$ and 
$Q_{ij}$ = $I(X_j;Y|X_i)$ and
the non-zero indices of the solution $\mathbf{x}$, constitute $\mathbb{S}$. 

\vspace*{-2mm}
\subsection{Solving the Binary Quadratic Problem}
\vspace*{-2mm}
We aim to solve the \ref{condQ:Eq}, where $\mathbf{Q}$ is a symmetric and possibly indefinite matrix with non-negtiave elements, i.e., $Q_{ij} \geq 0$ for all $1 \leq i, j \leq n$. Although this problem is well defined, it is highly nonconvex due to the nonconvex constraint.  And it is also known that this binary quadratic problem is NP-hard~\cite{Garey:1990:CIG}. So, it is difficult to find the optimal value in practice. Therefore, we aim to find an approximate solution to the BQP. Approximation methods such as linear relaxation, spectral relaxation, semidefinite programming, truncated power method and low rank bilinear approximation have been applied to solve this family of NP-hard problems. We will now briefly review some of these relaxation methods.  

\vspace*{-2mm}
\subsubsection{Linear Relaxation}
\label{linr:ssec}
By introducing a new variable $w_{ij}$, we are able to linearize the quadratic term. With $\bx = [x_1,\ldots,x_n]^\top$, we formulate the following:
\begin{equation}
\tag{LP1}
\begin{aligned}
\max_{\bx} &\sum_{i=1}^n\sum_{j=1}^n Q_{ij}w_{ij}, ~~\st~ 2w_{ij} \leq x_i + x_j, ~\sum_{i=1}^n x_i = k,\\
&x_{i}, w_{ij}\in \{ 0 ,1 \}, \quad \forall 1 \leq i, j \leq n. \\
\end{aligned}
\end{equation}
LP1 is equivalent to BQP, which is also NP-hard. 
LP1 is simplified by relaxing $w_{ij}\in [0,1]$. 
One of the optimality conditions is then given by, $2w_{ij} = x_i + x_j$. 
This relaxation reduces LP1 to LP2 which is given by, 
\begin{equation}
\tag{LP2}
\begin{aligned}
\textbf{Linear} = \max_{\bx} &\sum_{i=1}^n\sum_{j=1}^n \frac{1}{2}Q_{ij}(x_i + x_j) = \| \mathbf{Q}\bx \|_1 \\
\st &\|\bx \|_1= k, \bx \in \{ 0 ,1 \}^n. 
\end{aligned}
\end{equation}
Since $Q_{ij}\geq 0$, the maximum value for \textbf{Linear} is equivalent to the $k$ largest column (or row) sum of $\mathbf{Q}$. 
The solution to \textbf{Linear} guarantees a tight lower bound to the BQP (see appendix). 
In our work, we use the solution from \textbf{Linear} to initialize the input to other algorithms. 

\vspace*{-2mm}
\subsubsection{Truncated Power Method}
\label{tpm:ssec}
Truncated power (\textbf{TPower}) method aims to find the largest $k$-sparse eigenvector. Given a positive semidefinite matrix $A$, the largest $k$-sparse eigenvalue can be defined as follows \cite{yuan2013truncated}:
\begin{equation}
\lambda_{\max}(A,k) = \max ~\bx^\top A \bx, ~\st \| \bx \| = 1, \| \bx \|_0 \leq k
\label{TPower:Eq}
\end{equation}
Matrix $A$ is required to be positive semidefinte, but \textbf{TPower} method can be extended to deal with general symmetric matrices by setting $A \leftarrow (A+\tilde\lambda I_{n\times n})$ where $\tilde\lambda > 0$ such that $(A+\tilde\lambda I_{n\times n}) \in \mathbb{S}_+^n$. 
The truncated power method is given as follows. Starting from an initial $k$-sparse vector $\bx_0$, at each iteration $t$, we multiply the vector $\bx_{t-1}$ by $A$ and then truncate the entries of $A\bx_{t-1}$ to zeros and set the largest $k$ entries to 1. \textbf{TPower} can benefit from a good starting point. We use the solution from \textbf{Linear} as the initial sparse vector $\bx_0$. 

\vspace*{-2mm}
\subsubsection{Low Rank Bilinear Approximation}
\label{lrba:ssec}
The low rank bilinear approximation procedure has been applied to solve 
the $k$-Sparse-PCA \cite{papailiopoulos2013sparse} and the Densest-$k$-Subgraph \cite{papailiopoulos2014finding}. 
It approximates the solution to a BQP by applying a bilinear relaxation which is given as, 
\begin{flalign}
\text{BQP}_b =& \max_{\mathbf{x}, \mathbf{y}}\{\mathbf{x}^\top \mathbf{Q}\mathbf{y}\} ~
\text{s.t.} ~\mathbf{x}\in\{0,1\}^n, ~\mathbf{y}\in\{0,1\}^n, \label{lr:Eq}\\
&~||\mathbf{x}||_1 = ||\mathbf{y}||_1 = k \notag
\end{flalign}
$\text{BQP}_b$ provides a good (2$\rho$-approximation \cite{papailiopoulos2014finding}) 
approximation to BQP and can be solved in polynomial time using a $d$-rank approximation of $\bQ$. 
The authors in \cite{papailiopoulos2014finding} have developed the Spannogram algorithm to estimate a candidate set 
(termed the Spannogram $\mathscr{S}$) of vector pairs $(\bx, \by)$ with $k$ features. 
One of the vectors from the pair that maximizes $\text{BQP}_b$, is the bilinear approximate solution to BQP. 

\vspace*{-2mm}
\subsubsection{Related BQP Approximation Methods}
\label{sr:ssec}
For the sake of completeness we briefly mention two other techniques that have been applied to approximate the BQP 
in the domain of feature selection. 
The authors in \cite{nguyen2014effective} propose two relaxation techniques, (1) Spectral relaxation and 
(2) Semidefinite Programming relaxation. In Spectral Relaxation, the constraint on 
the values of $\bx$ are relaxed to continuous values. 
The values of $\bx$ being positive, can be interpreted as feature weights. 
The solution to \textbf{Spectral} has been shown to 
be the largest eigenvector of $\mathbf{Q}$ \cite{nguyen2014effective}. 
Semidefinite Programming relaxation (\textbf{SDP}) 
is a closer approximation to the BQP than \textbf{Spectral}. 
The BQP is approximated by a trace maximization problem using semidefinite relaxation. 
The approximate solution $\bx$, to the BQP, is obtained from the \textbf{SDP} solution through random projection rounding based on 
Cholesky decomposition \cite{goemans1995improved}. For our experiments, we apply random rounding with 100 projections. 
For further details, please refer to \cite{nguyen2014effective}. \\

\vspace*{-2mm}
\section{Existing Methods For MI Based Feature Selection}
\vspace*{-2mm}
\label{exist:sec}
In this section we will discuss existing filter based 
Mutual Information measures for feature selection. We categorize these procedures as 
Greedy (iterative) Selectors and Global Selectors. We limit our discussion to the best feature selectors we found in our studies. 
For a broader perspective on feature selection procedures, we suggest the works of 
Tang et al. \cite{tang2014feature}, Guyon and Elisseeff \cite{guyon2003introduction} and Duch \cite{duch2006filter}. 

\vspace*{-2mm}
\subsection{Greedy Feature Selectors}
\vspace*{-2mm}
Greedy selectors usually begin with an empty set $\mathbb{S}$, and iteratively add to it the most 
important feature indices, until a fixed number of feature indices are selected or a stopping criterion is reached. 
MI between the random variables (features and label) provides the ranking for the features. 
The most basic form of the scoring function is \textit{Maximum Relevance} (\textbf{MaxRel}) \cite{lewis1992feature}, 
where the score is simply the MI between the feature and the class variable. 
To account for the redundancy $I(X_i;X_j)$ between features, Peng et al. \cite{peng2005feature}, introduced the 
\textit{Maximum Relevance Minimum Redundancy} (\textbf{MRMR}) 
criterion, which selects features with maximum relevance to the label and minimum redundancy between each other. 
A greedy procedure very closely related to our technique is the \textit{Joint Mutual Information} (\textbf{JMI}), that
was developed by Yang and Moody \cite{yang1999data}, and later by Meyer et al. \cite{meyer2008information}. 
In our experiments (Sec. \ref{expts:sec}), we evaluate the features selected by these iterative procedures across multiple datasets. 

\vspace*{-2mm}
\subsection{Global Feature Selectors}
\vspace*{-2mm}
There is limited work on MI based global feature selection. 
In \cite{rodriguez2010quadratic}, Rodriguez-Lujan et al. introduced the \textit{Quadratic Programming Feature Selection} (\textbf{QPFS}). 
This method can be viewed as a global alternative to \textbf{MRMR}. 
The second global technique proposed by Nguyen et al. \cite{nguyen2014effective}, is related 
to our method presented in (\ref{condQ:Eq}). 
Nguyen et al. \cite{nguyen2014effective}, model a global feature selection problem  
based on the CMI matrix $\bQ$, and apply \textbf{Spectral} and \textbf{SDP} methods to approximate the solution. 

\vspace*{-2mm}
\section{Experiments}
\vspace*{-2mm}
\label{expts:sec}
We consider two factors to evaluate the feature selection algorithms discussed so far, viz., time complexity and classification 
accuracies. For the global methods, since we are approximating the solution, 
we also consider the tightness of the approximation.  
We conducted our experiments using MATLAB on an Intel Core-i7 2.3 GHz processor with 16GB of memory. 

\vspace*{-2mm}
\subsection{Feature Selectors: a test of scalability}
\vspace*{-2mm}
Greedy feature selectors have time complexities of the order $O(nk)$, which is negligible compared to the time 
complexities of the global feature selectors. Table \ref{time:tab} lists the time complexities for the global algorithms. 
To study time complexities, we conduct multiple experiments $(n,k)$, where we simulate a CMI matrix $\bQ$ by 
a random positive symmetric matrix of size $[n \times n]$, and select $k$ features. 
The time complexity for experiment $(n,k)$, is the average time of convergence over 10 runs.
We use the same set of random matrices for each of the algorithms in the experiment. 
Figure (\ref{timeCurve:Fig}) depicts the convergence times for different experiments. 
\textbf{Linear} algorithm is the most efficient, followed closely by \textbf{Spectral} and \textbf{TPower} methods. 
We used the CVX \cite{cvx} implementation with SDPT3 solver \cite{toh1999sdpt3}, 
for all our \textbf{SDP} experiments. The \textbf{SDP} solver has a huge memory footprint and with matrix sizes $n\geq700$, 
we run into `Out of Memory' errors. 
For the \textbf{LowRank} method, we used the following parameters, $d=3, \epsilon=0.1, \delta=0.1$ for all our experiments. 
Please refer to \cite{papailiopoulos2014finding} for 
more details on the \textbf{LowRank}.
\begin{table}[t]
\centering
\small 
\setlength{\tabcolsep}{7pt} 
\caption[Table caption text]{Time complexities for the Global approximate solutions for BQP in number of features $n$}
\begin{tabular}{| c | c | c | c | c |}
\hline
\textbf{Linear} & \textbf{Spectral} & \textbf{SDP} & \textbf{LowRank}\textsuperscript{*} & \textbf{TPower}\textsuperscript{**} \\
\hline
$O(nk)$ & $O(n^2)$ & $O(n^{4.5})$ & $O(n^{(d+1)})$ & $O(tn^2)$\\
\hline
\multicolumn{5}{l}{\textsuperscript{*}\footnotesize{$d$ is approximation rank}, 
~\textsuperscript{**}\footnotesize{$t$ is number of iterations}}\\
\end{tabular}
\label{time:tab}
\end{table}

\begin{figure}[!t]
\centering
\includegraphics[trim = 8mm 0mm 15mm 8mm, clip, width=0.49\textwidth, height=2.0in]{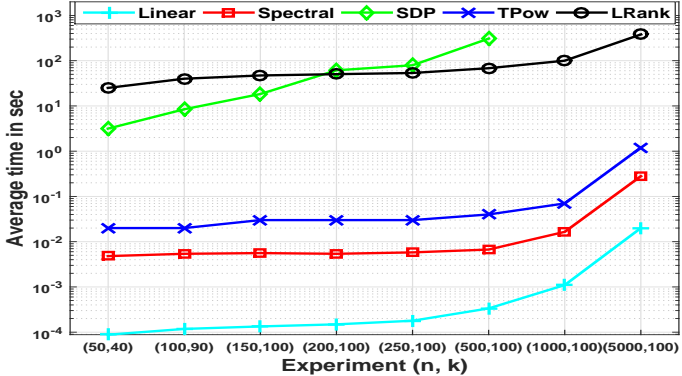}
\caption{Average time in seconds for an algorithm to select $k$ features from data containing $n$ features in
experiment $(n,k)$.}
\label{timeCurve:Fig}
\end{figure}  

\vspace*{-2mm}
\subsection{BQP Methods: a test of approximation}
\vspace*{-2mm}
\begin{figure}[!t]
\centering
\includegraphics[trim = 0mm 0mm 1mm 0mm, clip, width=0.49\textwidth, height=2.0in]{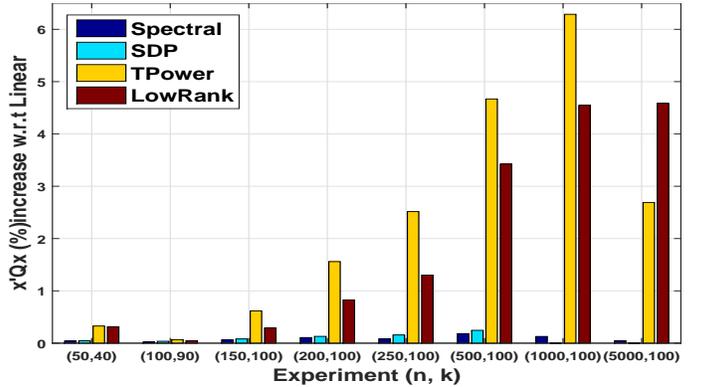}
\caption{The average percentage difference of the BQP objective values compared with the \textbf{Linear} BQP objective value.  
In experiment $(n,k)$, $n$ is the matrix dimension and $k$ is the number of features selected.}
\label{objBar:Fig}
\end{figure}

For the next set of experiments, we evaluated the 
degree of approximation for the global algorithms. Since we do not know the optimal solution 
for the BQP, we compared the methods on their relative objective values. 
We estimated the binary feature vector $\mathbf{x}$ after applying each of the methods and then evaluated the 
objective value $\mathbf{x}^\top\mathbf{Q}\mathbf{x}$. 
We evaluated the percentage difference of every algorithm's objective value with the \textbf{Linear} method's objective. 
Similar to the experimental evaluation used for time complexity, 
we generated random data for each experiment $(n,k)$ and  
averaged the values over 10 random runs.
Figure (\ref{objBar:Fig}) presents the results of the experiment. \textbf{TPower} and 
\textbf{LowRank} displayed the largest percentage increase from the \textbf{Linear}. 
Since \textbf{TPower} and \textbf{LowRank} approximate the BQP better than other methods, they must therefore 
be better feature selectors compared to \textbf{Linear}, \textbf{Spectral} and \textbf{SDP}. The \textbf{TPower} is also 
very efficient in terms of execution time and would be the ideal feature selector when considering both speed and accuracy.

\begin{table}[b]
\centering
\small 
\setlength{\tabcolsep}{10pt} 
\caption[Table caption text]{Datasets details: $n$ is number of features, $m$ is number of samples, $c$ is number of categories, 
\textbf{Error}: is average cross validation error (\%) using all features.}
\begin{tabular}{ l  c  c  c  c  c }
\rowcolor{Gray}
\multicolumn{1}{c}{\textbf{Data}}   & \multicolumn{1}{c}{$n$} & \multicolumn{1}{c}{$m$} & \multicolumn{1}{c}{$c$} & \multicolumn{1}{c}{\textbf{Error}} 	 & \multicolumn{1}{c}{\textbf{Ref.}} \\
\hline
Arrhythmia & 258 & 420 & 2 &  31.1  & \cite{Lichman:2013}\\
Colon & 2000 & 62 &2 &  37.0   & \cite{ding2005minimum}\\
Gisette & 4995 & 6000 & 2 &  2.5  & \cite{Lichman:2013}\\
Leukemia & 7070 & 72 & 2 &  26.4   & \cite{ding2005minimum}\\
Lung & 325 & 73 & 7 &  9.6   & \cite{ding2005minimum}\\
Lymphoma & 4026 & 96 & 9 &  81.3  & \cite{ding2005minimum}\\
Madelon & 500 & 2000 & 2 &  45.5   & \cite{Lichman:2013}\\
Multi-Feat & 649 & 2000 & 10 &  1.5   & \cite{Lichman:2013}\\
Musk2 & 166 & 6598 & 2 &  4.6  & \cite{Lichman:2013}\\
OptDigits & 62 & 3823 & 10 &  3.3  & \cite{Lichman:2013}\\
Promoter & 57 & 106 & 2 &  26.0  & \cite{Lichman:2013}\\
Spambase & 57 & 4601 & 2 &  7.5  & \cite{Lichman:2013}\\
Waveform & 21 & 5000 & 3 &  13.1  & \cite{Lichman:2013}\\
\hline
\end{tabular}
\label{datasets:tab}
\end{table}

\vspace*{-2mm}
\subsection{Feature Selectors: a test of classification error}
\vspace*{-2mm}
In this section we compare the \textbf{TPower} and the \textbf{LowRank} with other algorithms in terms of classification accuracies. 
For our experiments, we chose 13 publicly available 
datasets that are widely used to study MI based feature selection as in 
\cite{rodriguez2010quadratic, brown2012conditional, nguyen2014effective, peng2005feature}. 
The details of the datasets are captured in Table \ref{datasets:tab}. 
We performed feature selection for a set of $k$ values and estimated the classifier 
performance across all values of $k$. Starting at $k=10$, we incremented in steps of $1$ till $n$ or $100$, whichever is smaller. 
We evaluated the classifier performance using Leave-One-Out cross validation (if $m\leq 100$) or 
10-fold cross validation and obtained the cross validation errors(\%) for each fold. 
Since the average error across all values of $k$ 
is not a good measure of the classifier performance, we applied the paired t-test, as also mentioned in 
\cite{rodriguez2010quadratic, nguyen2014effective, herman2013mutual}, across the cross validation folds. 
For a fixed dataset and a fixed value of $k$, to compare \textbf{TPower} 
with say, \textbf{MaxRel}, we applied the one sided paired t-test at 5\% significance over the error(\%) 
of the cross validation folds for the two algorithms. We set the performance of \textbf{TPower} vs. \textbf{MaxRel} 
to win = $w$, tie = $t$ and loss = $l$, based on the largest number of t-test decisions across all the $k$ values. 
Along the lines of earlier studies in feature selection, we used linear SVM as the classifier. 
To estimate the CMI we need to discretize the features. We believe the role of discretization is not unduly critical as long as it is consistent across all the experiments. We discretized the features using the Class Attribute Interdependence Maximization (CAIM) algorithm 
developed by Kurgan and Cios et al. \cite{kurgan2004caim}. 
Feature selection was performed on discretized data but 
the classification (after feature selection) was performed on the original feature space. 
Using the above procedure, we compared the performance of \textbf{TPower} and \textbf{LowRank} with all the other algorithms. 
Tables \ref{tpower:tab} and \ref{lowrank:tab} display the results of the experiment. 
The values in Table \ref{tpower:tab} (likewise \ref{lowrank:tab}) correspond 
to the difference in the average of classification error(\%) between \textbf{TPower} (likewise \textbf{LowRank}) and all the other algorithms. 
From the results in these tables, we find that \textbf{TPower} and \textbf{LowRank} do well on most of the datasets across all the algorithms. 
When compared against each other 
\textbf{LowRank} does better than \textbf{TPower}. 
For the sake of brevity we have not displayed the comparisons between other pairs of algorithms. 
The win/tie/loss numbers by themselves do not provide a complete picture of the comparison. 
The difference in the average error also needs to be taken into account to assess the performance. A large percentage of negative values in the columns and their magnitudes indicate the low error values in classification for \textbf{TPower} and \textbf{LowRank}. 
Figure (\ref{fig:experiments}) displays the average classification error(\%) trends for varying values of $k$ for $3$ datasets. 
Figures (\ref{fig:greedyColon}) and (\ref{fig:globalColon}) for the Colon dataset, suggest that the addition of more 
features does not necessarily reduce classification error. 
Classification error trends also help us cross validate the best value of $k$ for a dataset. 
For a given dataset,  the error trends between the global and greedy procedures follow a similar pattern. This perhaps indicates that 
nearly similar features are being selected using both types of methods. We also note that for huge datasets with large values of $n$, greedy methods 
may not be a bad choice for feature selection. 

\begin{table*}[t]
\centering
\small 
\setlength{\tabcolsep}{6pt} 
\caption[Table caption text]{Comparison of $\textbf{TPower}$ with other algorithms. The table values measure the difference in 
average classification accuracies of \textbf{TPower} with other algorithms. $w$, $t$ and $l$ indicate one-sided paired t-test results. 
The last row displays the total number of Wins($W$), Ties($T$) and Loss($L$). 
N/A indicates comparison data was unavailable for large datasets using \textbf{SDP}.}
\begin{tabular}{l c c c c c c c}
\rowcolor{Gray}
\multicolumn{1}{c}{\textbf{Data}}   & \multicolumn{1}{c}{\textbf{MaxRel}} & \multicolumn{1}{c}{\textbf{MRMR}} & \multicolumn{1}{c}{\textbf{JMI}} & \multicolumn{1}{c}{\textbf{QPFS}} & \multicolumn{1}{c}{\textbf{Spectral}} & \multicolumn{1}{c}{\textbf{SDP}} & \multicolumn{1}{c}{\textbf{LowRank}}\\ \hline\hline
Arrythmia	&	-0.37	$\pm$	1.4	$^{	t	}$	&	0.32	$\pm$	1.0	$^{	l	}$	&	0.02	$\pm$	1.0	$^{	l	}$	&	0.20	$\pm$	1.8	$^{	l	}$	&	-0.08	$\pm$	1.1	$^{	t	}$	&	-0.18	$\pm$	1.0	$^{	w	}$	&	-0.05	$\pm$	0.8	$^{	l	}$	\\
Colon	&	-7.28	$\pm$	4.6	$^{	w	}$	&	-4.42	$\pm$	4.2	$^{	w	}$	&	-2.47	$\pm$	3.8	$^{	w	}$	&	-6.70	$\pm$	4.6	$^{	w	}$	&	-0.60	$\pm$	2.8	$^{	w	}$	&	N/A	&	4.03	$\pm$	4.5	$^{	l	}$	\\
Gisette	&	-1.32	$\pm$	0.6	$^{	w	}$	&	0.00	$\pm$	0.7	$^{	w	}$	&	-1.12	$\pm$	0.6	$^{	w	}$	&	-1.38	$\pm$	0.7	$^{	w	}$	&	-1.26	$\pm$	0.6	$^{	w	}$	&	N/A	&	0.33	$\pm$	0.6	$^{	l	}$	\\
Leukemia	&	0.11	$\pm$	1.4	$^{	w	}$	&	1.40	$\pm$	1.6	$^{	l	}$	&	1.59	$\pm$	1.8	$^{	l	}$	&	0.41	$\pm$	1.1	$^{	t	}$	&	-0.03	$\pm$	0.6	$^{	w	}$	&	N/A	&	1.49	$\pm$	1.3	$^{	l	}$	\\
Lung	&	-9.43	$\pm$	4.1	$^{	w	}$	&	-2.52	$\pm$	4.2	$^{	w	}$	&	-3.83	$\pm$	4.2	$^{	w	}$	&	0.60	$\pm$	2.8	$^{	l	}$	&	-0.88	$\pm$	2.2	$^{	w	}$	&	-0.88	$\pm$	2.1	$^{	w	}$	&	-1.59	$\pm$	2.4	$^{	w	}$	\\
Lymphoma	&	-2.76	$\pm$	4.8	$^{	w	}$	&	3.35	$\pm$	4.7	$^{	l	}$	&	2.93	$\pm$	5.3	$^{	l	}$	&	4.99	$\pm$	3.3	$^{	l	}$	&	-1.86	$\pm$	2.5	$^{	w	}$	&	N/A	&	3.29	$\pm$	4.2	$^{	l	}$	\\
Madelon	&	0.32	$\pm$	0.5	$^{	l	}$	&	0.80	$\pm$	0.9	$^{	l	}$	&	0.01	$\pm$	0.4	$^{	w	}$	&	-0.22	$\pm$	0.7	$^{	w	}$	&	-0.01	$\pm$	0.6	$^{	w	}$	&	0.15	$\pm$	0.6	$^{	l	}$	&	-0.11	$\pm$	0.4	$^{	t	}$	\\
MultiFeatures	&	0.02	$\pm$	0.3	$^{	w	}$	&	0.24	$\pm$	0.3	$^{	l	}$	&	0.17	$\pm$	0.3	$^{	l	}$	&	-0.42	$\pm$	0.3	$^{	w	}$	&	0.10	$\pm$	0.3	$^{	w	}$	&	0.11	$\pm$	0.3	$^{	w	}$	&	0.01	$\pm$	0.3	$^{	l	}$	\\
Musk2	&	-0.45	$\pm$	0.6	$^{	w	}$	&	-0.22	$\pm$	0.7	$^{	w	}$	&	-0.18	$\pm$	0.5	$^{	w	}$	&	-0.31	$\pm$	0.6	$^{	w	}$	&	0.06	$\pm$	0.4	$^{	w	}$	&	0.03	$\pm$	0.5	$^{	w	}$	&	0.05	$\pm$	0.4	$^{	w	}$	\\
OptDigits	&	-0.19	$\pm$	0.5	$^{	w	}$	&	-0.01	$\pm$	0.6	$^{	t	}$	&	0.16	$\pm$	0.6	$^{	l	}$	&	-0.65	$\pm$	1.0	$^{	w	}$	&	0.03	$\pm$	0.3	$^{	l	}$	&	-2.53	$\pm$	13.0	$^{	w	}$	&	0.08	$\pm$	0.4	$^{	l	}$	\\
Promoter	&	0.73	$\pm$	3.0	$^{	l	}$	&	-0.04	$\pm$	3.2	$^{	w	}$	&	0.19	$\pm$	3.0	$^{	l	}$	&	-1.29	$\pm$	3.8	$^{	w	}$	&	-0.48	$\pm$	2.8	$^{	w	}$	&	-0.56	$\pm$	2.9	$^{	w	}$	&	-0.17	$\pm$	3.1	$^{	w	}$	\\
Spambase	&	-0.34	$\pm$	0.3	$^{	w	}$	&	0.06	$\pm$	0.2	$^{	l	}$	&	-0.23	$\pm$	0.3	$^{	w	}$	&	0.03	$\pm$	0.4	$^{	l	}$	&	-0.09	$\pm$	0.3	$^{	w	}$	&	-0.10	$\pm$	0.3	$^{	w	}$	&	0.02	$\pm$	0.1	$^{	l	}$	\\
Waveform	&	-0.13	$\pm$	0.3	$^{	w	}$	&	0.06	$\pm$	0.3	$^{	l	}$	&	-0.01	$\pm$	0.0	$^{	t	}$	&	0.04	$\pm$	0.2	$^{	t	}$	&	0.04	$\pm$	0.1	$^{	t	}$	&	0.00	$\pm$	0.2	$^{	t	}$	&	-0.01	$\pm$	0.2	$^{	t	}$	\\ \hline

\multicolumn{1}{c}{\#$W$/$T$/$L$:} & \multicolumn{1}{c}{10/1/2} & \multicolumn{1}{c}{5/1/7} & \multicolumn{1}{c}{6/1/6} & \multicolumn{1}{c}{7/2/4} & \multicolumn{1}{c}{10/2/1} & \multicolumn{1}{c}{7/1/1} & \multicolumn{1}{c}{3/2/8}  \\\hline
 
\end{tabular}
\label{tpower:tab}
\end{table*}

\begin{table*}[t]
\centering
\small 
\setlength{\tabcolsep}{6pt} 
\caption[Table caption text]{Comparison of $\textbf{LowRank}$ with other algorithms. Table structure similar to Table \ref{tpower:tab}}
\begin{tabular}{l c c c c c c c}
\rowcolor{Gray}
\multicolumn{1}{c}{\textbf{Data}}   & \multicolumn{1}{c}{\textbf{MaxRel}} & \multicolumn{1}{c}{\textbf{MRMR}} & \multicolumn{1}{c}{\textbf{JMI}} & \multicolumn{1}{c}{\textbf{QPFS}} & \multicolumn{1}{c}{\textbf{Spectral}} & \multicolumn{1}{c}{\textbf{SDP}} & \multicolumn{1}{c}{\textbf{TPower}}\\ \hline\hline
Arrythmia	&	-0.32	$\pm$	1.3	$^{	l	}$	&	0.36	$\pm$	1.0	$^{	l	}$	&	0.07	$\pm$	1.0	$^{	l	}$	&	0.25	$\pm$	1.7	$^{	l	}$	&	-0.03	$\pm$	1.1	$^{	t	}$	&	-0.13	$\pm$	1.0	$^{	w	}$	&	0.05	$\pm$	0.8	$^{	w	}$	\\
Colon	&	-11.31	$\pm$	4.7	$^{	w	}$	&	-8.45	$\pm$	4.3	$^{	w	}$	&	-6.50	$\pm$	3.9	$^{	w	}$	&	-10.73	$\pm$	5.3	$^{	w	}$	&	-4.63	$\pm$	5.0	$^{	w	}$	&	N/A	&	-4.03	$\pm$	4.5	$^{	w	}$	\\
Gisette	&	-1.65	$\pm$	0.5	$^{	w	}$	&	-0.32	$\pm$	0.5	$^{	w	}$	&	-1.44	$\pm$	0.7	$^{	w	}$	&	-1.70	$\pm$	0.6	$^{	w	}$	&	-1.58	$\pm$	0.6	$^{	w	}$	&	N/A	&	-0.33	$\pm$	0.6	$^{	w	}$	\\
Leukemia	&	-1.39	$\pm$	1.4	$^{	w	}$	&	-0.09	$\pm$	1.5	$^{	t	}$	&	0.09	$\pm$	1.7	$^{	t	}$	&	-1.09	$\pm$	1.3	$^{	w	}$	&	-1.52	$\pm$	1.2	$^{	w	}$	&	N/A	&	-1.49	$\pm$	1.3	$^{	w	}$	\\
Lung	&	-7.83	$\pm$	4.1	$^{	w	}$	&	-0.92	$\pm$	4.5	$^{	w	}$	&	-2.23	$\pm$	4.5	$^{	w	}$	&	2.19	$\pm$	3.7	$^{	l	}$	&	0.71	$\pm$	2.5	$^{	l	}$	&	0.71	$\pm$	2.3	$^{	l	}$	&	1.59	$\pm$	2.4	$^{	l	}$	\\
Lymphoma	&	-6.06	$\pm$	3.5	$^{	w	}$	&	0.06	$\pm$	2.1	$^{	l	}$	&	-0.36	$\pm$	2.1	$^{	l	}$	&	1.70	$\pm$	2.5	$^{	l	}$	&	-5.15	$\pm$	3.8	$^{	w	}$	&	N/A	&	-3.29	$\pm$	4.2	$^{	w	}$	\\
Madelon	&	0.43	$\pm$	0.5	$^{	l	}$	&	0.91	$\pm$	0.8	$^{	l	}$	&	0.12	$\pm$	0.5	$^{	l	}$	&	-0.11	$\pm$	0.7	$^{	w	}$	&	0.10	$\pm$	0.6	$^{	l	}$	&	0.26	$\pm$	0.5	$^{	l	}$	&	0.11	$\pm$	0.4	$^{	t	}$	\\
MultiFeatures	&	0.01	$\pm$	0.4	$^{	l	}$	&	0.23	$\pm$	0.3	$^{	l	}$	&	0.16	$\pm$	0.4	$^{	l	}$	&	-0.43	$\pm$	0.3	$^{	w	}$	&	0.10	$\pm$	0.4	$^{	l	}$	&	0.10	$\pm$	0.4	$^{	l	}$	&	-0.01	$\pm$	0.3	$^{	w	}$	\\
Musk2	&	-0.50	$\pm$	0.4	$^{	w	}$	&	-0.27	$\pm$	0.7	$^{	w	}$	&	-0.23	$\pm$	0.5	$^{	w	}$	&	-0.36	$\pm$	0.5	$^{	w	}$	&	0.02	$\pm$	0.3	$^{	l	}$	&	-0.02	$\pm$	0.4	$^{	l	}$	&	-0.05	$\pm$	0.4	$^{	l	}$	\\
OptDigits	&	-0.26	$\pm$	0.6	$^{	w	}$	&	-0.09	$\pm$	0.4	$^{	w	}$	&	0.08	$\pm$	0.4	$^{	l	}$	&	-0.72	$\pm$	1.2	$^{	w	}$	&	-0.04	$\pm$	0.4	$^{	t	}$	&	-2.61	$\pm$	13.1	$^{	w	}$	&	-0.08	$\pm$	0.4	$^{	w	}$	\\
Promoter	&	0.90	$\pm$	3.1	$^{	l	}$	&	0.13	$\pm$	2.7	$^{	w	}$	&	0.35	$\pm$	2.9	$^{	w	}$	&	-1.13	$\pm$	3.7	$^{	w	}$	&	-0.31	$\pm$	2.8	$^{	t	}$	&	-0.40	$\pm$	2.5	$^{	t	}$	&	0.17	$\pm$	3.1	$^{	l	}$	\\
Spambase	&	-0.36	$\pm$	0.3	$^{	w	}$	&	0.04	$\pm$	0.2	$^{	l	}$	&	-0.24	$\pm$	0.3	$^{	w	}$	&	0.02	$\pm$	0.4	$^{	t	}$	&	-0.11	$\pm$	0.3	$^{	w	}$	&	-0.12	$\pm$	0.3	$^{	w	}$	&	-0.02	$\pm$	0.1	$^{	w	}$	\\
Waveform	&	-0.12	$\pm$	0.4	$^{	w	}$	&	0.07	$\pm$	0.1	$^{	l	}$	&	0.00	$\pm$	0.2	$^{	t	}$	&	0.05	$\pm$	0.1	$^{	t	}$	&	0.05	$\pm$	0.1	$^{	t	}$	&	0.02	$\pm$	0.1	$^{	t	}$	&	0.01	$\pm$	0.2	$^{	t	}$	\\ \hline

\multicolumn{1}{c}{\#$W$/$T$/$L$:} & \multicolumn{1}{c}{9/0/4} & \multicolumn{1}{c}{6/1/6} & \multicolumn{1}{c}{6/2/5} & \multicolumn{1}{c}{8/2/3} & \multicolumn{1}{c}{5/4/4} & \multicolumn{1}{c}{3/2/4} & \multicolumn{1}{c}{8/2/3}  \\\hline
 
\end{tabular}
\label{lowrank:tab}
\end{table*}

\setlength{\belowcaptionskip}{3pt}
\begin{figure*}[!t]
        \centering
        \begin{subfigure}[b]{0.3\textwidth}
                \includegraphics[trim = 5mm 0mm 3mm 7mm, clip, width=2.3in, height=1.5in]{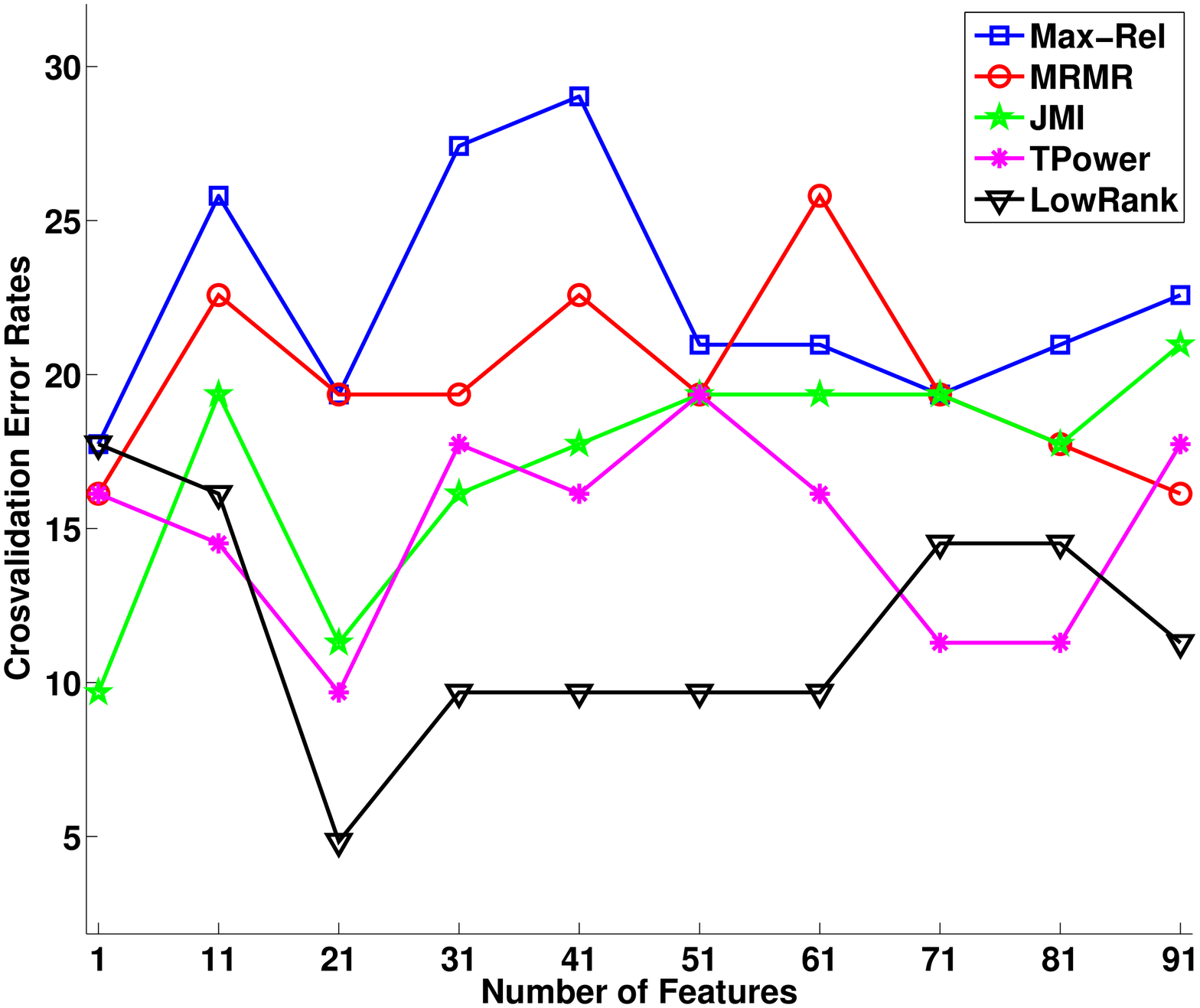}
                \caption{Colon}
                \label{fig:greedyColon}
        \end{subfigure}%
        \quad 
        \begin{subfigure}[b]{0.3\textwidth}
                \includegraphics[trim = 5mm 0mm 3mm 7mm, clip, width=2.3in, height=1.5in]{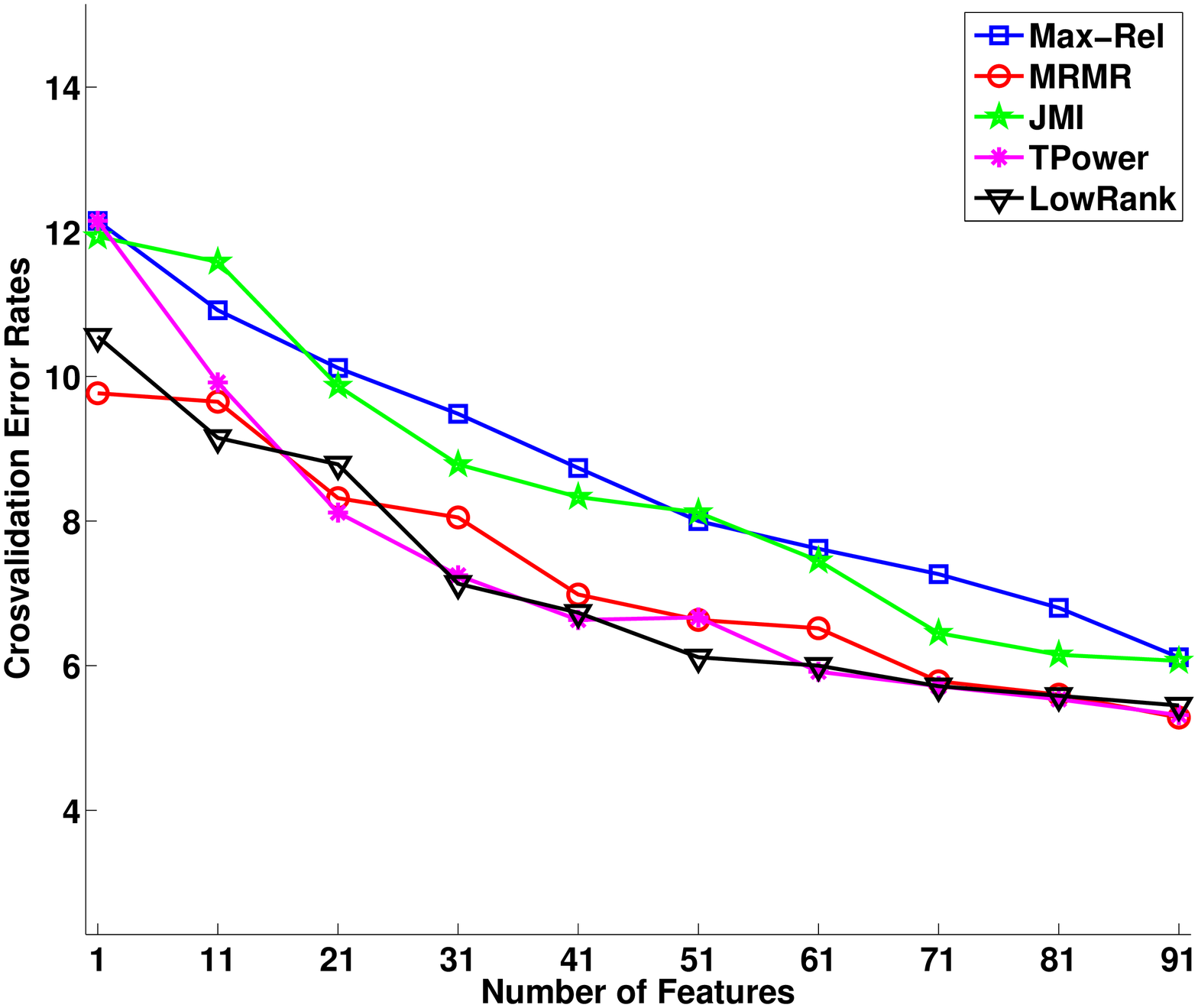}
                \caption{Gisette}
                \label{fig:greedyGisette}
        \end{subfigure}
        \quad 
        \begin{subfigure}[b]{0.3\textwidth}
                \includegraphics[trim = 5mm 0mm 3mm 7mm, clip, width=2.3in, height=1.5in]{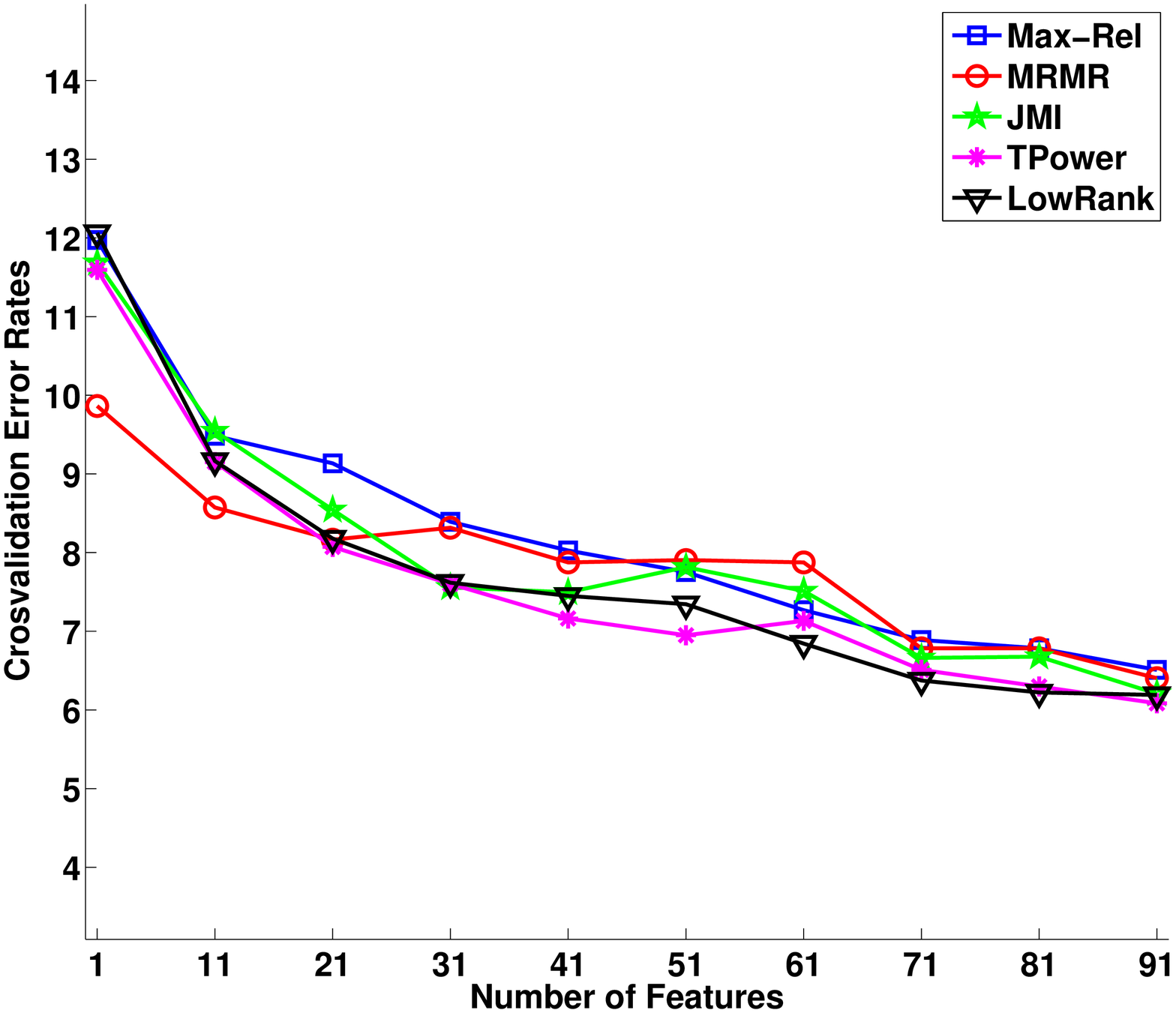}
                \caption{Musk2}
                \label{fig:greedyMusk2}
        \end{subfigure}
        \begin{subfigure}[b]{0.3\textwidth}
                \includegraphics[trim = 5mm 0mm 3mm 7mm, clip, width=2.3in, height=1.5in]{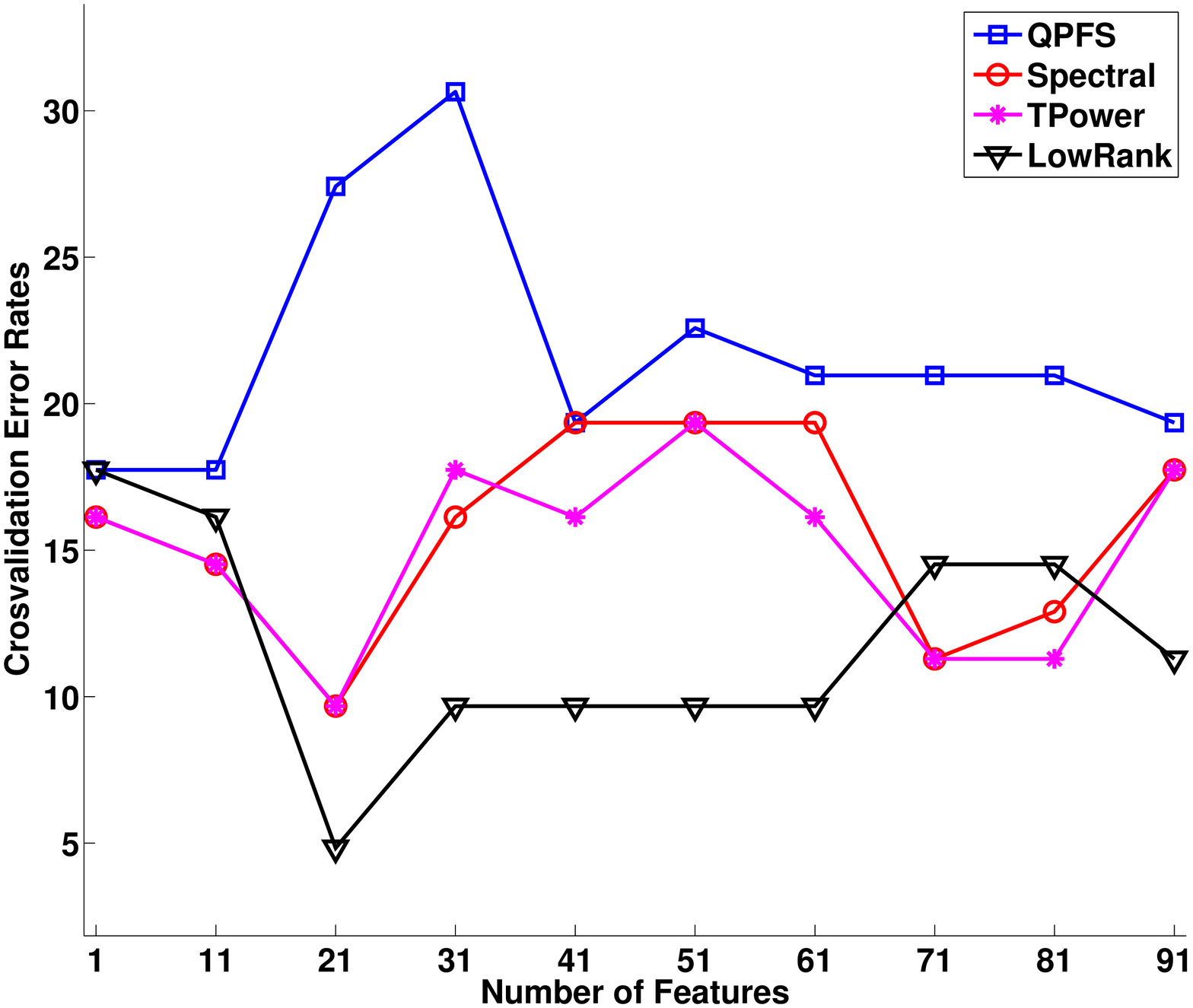}
                \caption{Colon}
                \label{fig:globalColon}
        \end{subfigure}%
        \quad 
        \begin{subfigure}[b]{0.3\textwidth}
                \includegraphics[trim = 5mm 0mm 3mm 7mm, clip, width=2.3in, height=1.5in]{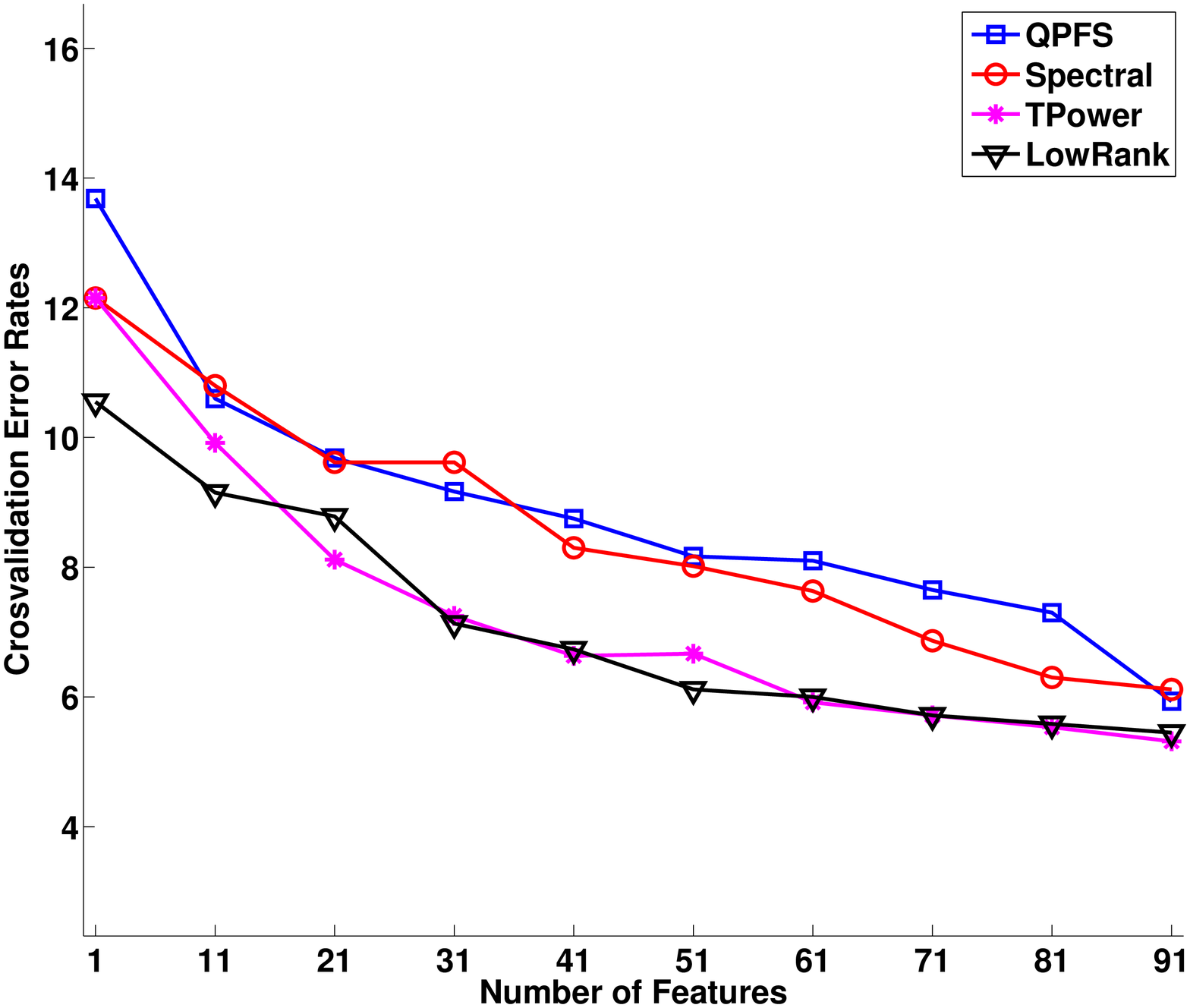}
                \caption{Gisette}
                \label{fig:globalGisette}
        \end{subfigure}
        \quad 
        \begin{subfigure}[b]{0.3\textwidth}
                \includegraphics[trim = 5mm 0mm 3mm 7mm, clip, width=2.3in, height=1.5in]{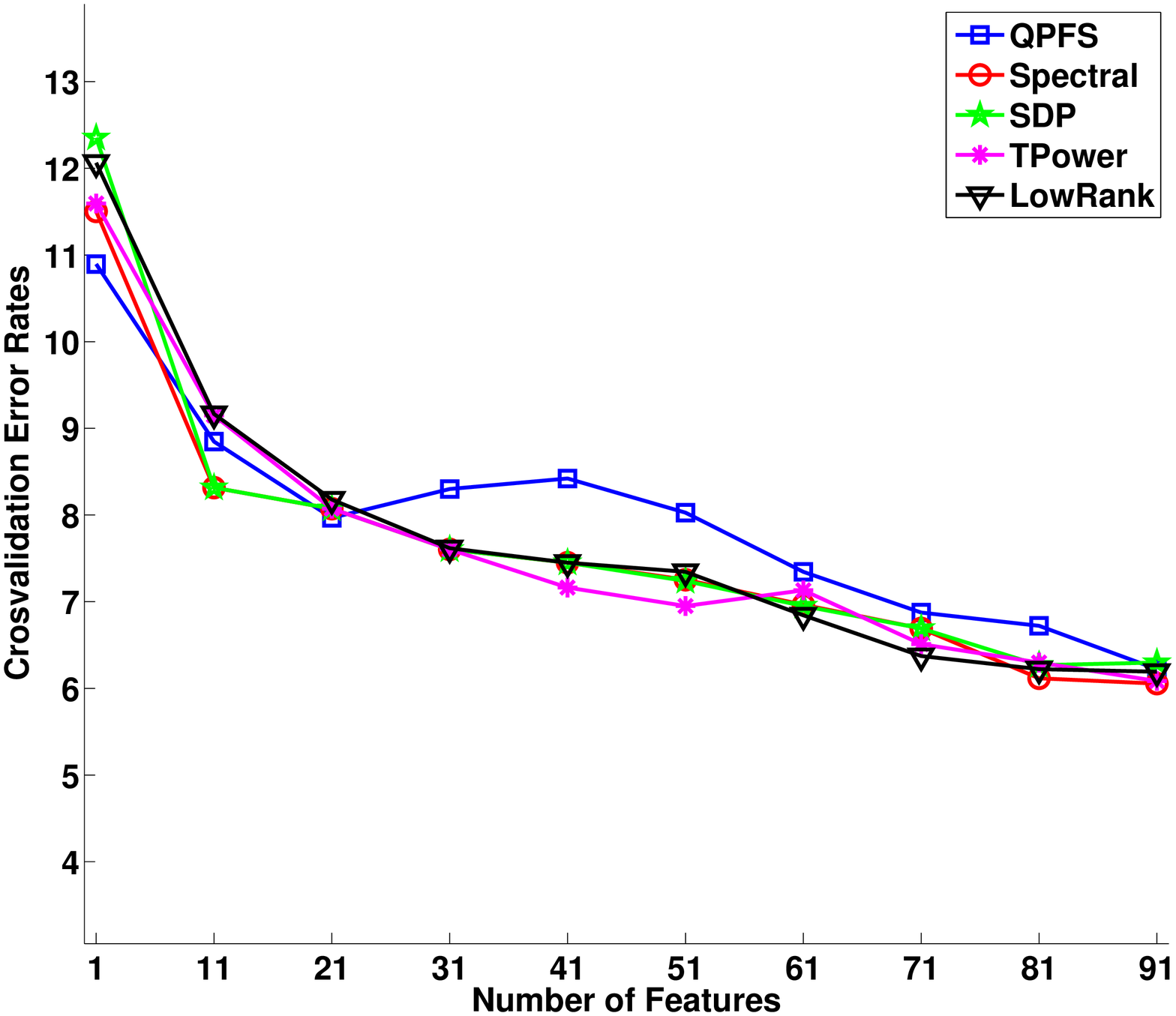}
                \caption{Musk2}
                \label{fig:globalMusk2}
        \end{subfigure}
        \caption{Average cross validation error(\%) vs. Number of features. First Row: Comparison of Greedy methods with \textbf{TPower} 
        and \textbf{LowRank} for 3 datasets. Second Row: Comparison of Global methods across 3 datasets.}
\label{fig:experiments}
\end{figure*}
\vspace*{-2mm}
\section{Discussion and Conclusions}
\vspace*{-2mm}
\label{discuss:sec}
Feature selection is a NP-hard problem and newer methods to approximate the solution 
will help drive research in this area. 
We have demonstrated that current methods applying MI and CMI for feature selection, inherently assume conditional independence between features. 
The conditional independence assumption limits the number of features compared to only 2 or 3 features at a time. 
There is need to derive better measures to approximate the importance of a group of selected features. 
To estimate the probability distributions, we had to discretize the features. We have not studied the 
effect of discretization in our work. Progress along all of these fronts 
will provide directions to improve MI based feature selection. 
In conclusion, we can state that both \textbf{TPower} and \textbf{LowRank} perform better than existing global and iterative techniques 
across most of the datasets. While \textbf{LowRank} slightly outperforms \textbf{TPower}, it does not compare well 
with regards to time.
\vspace*{-2mm}
\section{Acknowledgments}
This material is based upon work supported by the National Science Foundation (NSF) under Grant No:1116360. Any opinions, findings, and conclusions or recommendations expressed in this material are those of the authors and do not necessarily reflect the views of the NSF.\\
We express our thanks to Nguyen X. Vinh \cite{nguyen2014effective}, for providing his implementation for comparison.


%


\appendix
\section{\text{LP2} provides a Good Lower bound for \text{BQP} }
We would like to study the goodness of approximation to BQP provided by the solution to LP2. 
We define an equivalent problem in terms of BQP. 
\begin{Prop}
\begin{flalign*}
&q: \{0,1\}^n \rightarrow (-\infty,0], ~\text{where}~\\
&q(\bx) = \text{BQP}-||\bQ||_1
\end{flalign*}
is equilvalent to BQP.
\end{Prop}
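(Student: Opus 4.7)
The proposition bundles two assertions: (i) the map $q(\bx) := \bx^\top \bQ \bx - \|\bQ\|_1$ takes values in $(-\infty, 0]$ on the feasible set, and (ii) maximizing $q$ subject to $\bx \in \{0,1\}^n$, $\|\bx\|_1 = k$ is equivalent to \ref{condQ:Eq} in the sense that the two problems share the same argmax. My plan is to dispatch both claims by exploiting two elementary facts: $\|\bQ\|_1$ is a constant independent of $\bx$, and $\bQ$ is entry-wise non-negative.

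\textbf{Equivalence step.} I would observe that $\|\bQ\|_1 = \sum_{i,j}|Q_{ij}|$ depends only on the CMI matrix, not on the decision variable $\bx$. Hence $q(\bx)$ and the original BQP objective $\bx^\top \bQ \bx$ differ by an additive constant. Because a constant shift of an objective preserves the argmax set on any feasible region, maximizing $q$ over the feasible set yields exactly the same optimizers as \ref{condQ:Eq}, with the two optimal values differing precisely by $\|\bQ\|_1$. This is the content of claim (ii).

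\textbf{Non-positivity step.} For claim (i) I would expand
\[
\bx^\top \bQ \bx \;=\; \sum_{i=1}^n \sum_{j=1}^n Q_{ij}\, x_i x_j .
\]
Since $\bx \in \{0,1\}^n$ each product $x_i x_j$ lies in $\{0,1\}$, and by the standing assumption $Q_{ij} \geq 0$ every term satisfies $Q_{ij} x_i x_j \leq Q_{ij}$. Summing gives $\bx^\top \bQ \bx \leq \sum_{i,j} Q_{ij} = \|\bQ\|_1$, so $q(\bx) \leq 0$ for every $\bx \in \{0,1\}^n$. In particular the image is contained in $[-\|\bQ\|_1, 0]$, which sits inside $(-\infty, 0]$ as claimed.

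\textbf{Main obstacle.} There is essentially no technical obstacle; the proposition is a normalization step that shifts the BQP objective into a non-positive form. The only mild subtlety is fixing the convention for $\|\bQ\|_1$: for the bound $\bx^\top \bQ \bx \leq \|\bQ\|_1$ to hold for all feasible $\bx$ one must read $\|\bQ\|_1$ as the entry-wise $\ell_1$ norm $\sum_{i,j}|Q_{ij}|$ rather than an induced operator norm, and this is the reading consistent with the vector $\ell_1$ notation used elsewhere in the paper. The usefulness of this reformulation lies in what follows the proposition: once both BQP and LP2 have been normalized to lie in $(-\infty,0]$, their gap can be quantified as a ratio of two non-positive quantities, which is what makes the subsequent lower-bound argument for LP2 work.
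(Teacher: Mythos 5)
Your proof is correct and follows essentially the same route as the paper: both argue non-positivity from $Q_{ij}\geq 0$ giving $\bx^\top\bQ\bx \leq \|\bQ\|_1$ (read as the sum of all entries), and both note that subtracting the constant $\|\bQ\|_1$ leaves the argmax unchanged. Your version merely spells out the entry-wise expansion that the paper leaves implicit.
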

\begin{proof}
$||\bQ||_1$ is the sum of all elements in $\bQ$. Since $\bQ_{ij} \geq 0$, BQP $\leq ||\bQ||_1~\forall\bx$. 
Therefore, $q(\bx)\leq 0, ~\forall \bx$. Since $||\bQ||_1$ is a constant for a matrix, under the same set of constraints,
\[
\argmax_{\mathbf{x}} \text{BQP} \equiv \argmax_{\mathbf{x}} q(\bx)\\
\]
\end{proof}
We define some new quantities for the derivation of the bound. Let $\bx^*$ be the solution of BQP. Let $\bar{\bx}$ be the solution of 
LP2. Since $||\bQ||_1 = \sum_{ij}\bQ_{ij}$, we can expand $||\bQ||_1$ in terms of any binary vector $\bx$. Specifically we define $||\bQ||_1,$ 
in terms of $\bar{\bx}$,
\begin{Def}
\begin{flalign}
&||\bQ||_1 = Q^0 + Q^1 + Q^2 ~ \text{where},\label{Qnorm:Eq}\\
&Q^0 \leftarrow \sum_{i,j|\bar{x}_i + \bar{x}_j=0}\bQ_{ij} \label{Q0:Eq}\\
&Q^1 \leftarrow \sum_{i,j|\bar{x}_i + \bar{x}_j=1}\bQ_{ij} \label{Q1:Eq} \\
&Q^2 \leftarrow \sum_{i,j|\bar{x}_i + \bar{x}_j=2}\bQ_{ij} \equiv \bar{\bx}^\top\bQ\bar{\bx} \label{Q2:Eq}
\end{flalign}
\end{Def}
\begin{Lemma}
\label{Lin:Lm1}
$||\bQ||_1 - \bar{\bx}^\top\bQ\bar{\bx} \geq Q^1$
\end{Lemma}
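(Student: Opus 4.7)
The plan is to simply substitute the decomposition of $\|\bQ\|_1$ from Definition (\ref{Qnorm:Eq}) and use non-negativity of the entries of $\bQ$. Specifically, I would first recall that by definition $Q^2 \equiv \bar{\bx}^\top \bQ \bar{\bx}$, so
\begin{flalign*}
\|\bQ\|_1 - \bar{\bx}^\top \bQ \bar{\bx} = (Q^0 + Q^1 + Q^2) - Q^2 = Q^0 + Q^1.
\end{flalign*}

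Then I would invoke the assumption stated in the discussion of LP2, namely $\bQ_{ij} \geq 0$ for all $i,j$. Since $Q^0$ is a sum over a subset of indices of such non-negative entries, we immediately get $Q^0 \geq 0$, and therefore
\begin{flalign*}
\|\bQ\|_1 - \bar{\bx}^\top \bQ \bar{\bx} = Q^0 + Q^1 \geq Q^1,
\end{flalign*}
completing the proof.

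There is no real obstacle here: the lemma is essentially a bookkeeping statement that packages the non-negativity of $Q^0$ for later use. The substantive work in the appendix will happen in subsequent lemmas that relate $Q^1$ and $Q^2$ to $\bx^{*\top}\bQ\bx^*$ in order to convert this inequality into an approximation guarantee for LP2 against the BQP optimum. I would therefore keep the proof of this particular lemma as a one-line observation, so that the key combinatorial/analytic step needed for the bound on $\text{BQP} - \bar{\bx}^\top \bQ \bar{\bx}$ can be isolated in the next lemma rather than being entangled with this trivial decomposition.
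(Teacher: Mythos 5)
Your proof is correct and follows essentially the same route as the paper: both substitute the decomposition $\|\bQ\|_1 = Q^0 + Q^1 + Q^2$, identify $Q^2$ with $\bar{\bx}^\top\bQ\bar{\bx}$, and drop the non-negative term $Q^0$. You are slightly more explicit than the paper in justifying $Q^0 \geq 0$ via the entrywise non-negativity of $\bQ$, which the paper leaves implicit, but this is the same one-line argument.
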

\begin{proof}
From (\ref{Qnorm:Eq}) we have,
\begin{flalign*}
||\bQ||_1 &= Q^0 + Q^1 + Q^2\\
||\bQ||_1 &\geq Q^1 + Q^2 \\
||\bQ||_1 & - \bar{\bx}^\top\bQ\bar{\bx} \geq Q^1 ~\text{using (\ref{Q2:Eq})}
\end{flalign*}
\end{proof}
Let $Q^*$ denote the maximum value of BQP and let $Q^*_{LP2}$ denote maximum value of LP2. 
If $\bx^*$ is the solution of BQP and $\bar{\bx}$ is the solution of LP2. 
We have the following result: 

\begin{Lemma}
$Q^*_{LP2} \geq Q^*$
\label{Lin:Lm2}
\end{Lemma}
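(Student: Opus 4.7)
The plan is to exhibit LP2 as a valid relaxation of BQP, from which $Q^*_{LP2} \geq Q^*$ follows directly. Both problems share the same feasible set $\{\bx \in \{0,1\}^n : \|\bx\|_1 = k\}$, so it is enough to prove that the LP2 objective dominates the BQP objective pointwise on this feasible set. Then evaluating at $\bx^*$ and invoking optimality of $\bar\bx$ for LP2 closes the argument in one line.

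For the pointwise domination, I would first rewrite the LP2 objective using symmetry of $\bQ$ as $\sum_{ij}\tfrac{1}{2}Q_{ij}(x_i+x_j) = \bx^\top \bQ \mathbf{1}$, where $\mathbf{1}$ is the all-ones vector. For binary $\bx$, the gap between this value and the BQP objective is $\bx^\top \bQ \mathbf{1} - \bx^\top \bQ \bx = \bx^\top \bQ (\mathbf{1}-\bx)$, which is nonnegative because the hypothesis of the subsection gives $Q_{ij}\geq 0$ for all $i,j$ and binarity gives $\mathbf{1}-\bx \geq 0$ componentwise. Equivalently, this step is just the elementary inequality $x_i x_j \leq (x_i+x_j)/2$ for $x_i,x_j\in\{0,1\}$, paired with nonnegativity of $\bQ$.

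To finish, I would instantiate the inequality at the BQP optimizer $\bx^*$. Since $\bx^*$ is feasible for LP2, by definition $Q^*_{LP2} \geq \sum_{ij}\tfrac{1}{2}Q_{ij}(x^*_i+x^*_j)$, and by the pointwise bound the right-hand side is at least $\bx^{*\top}\bQ\bx^* = Q^*$. Chaining these two inequalities yields $Q^*_{LP2} \geq Q^*$.

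I do not foresee any substantive obstacle: this is a one-line relaxation bound whose content is entirely carried by the sign assumption on $\bQ$. In particular, Lemma~\ref{Lin:Lm1} and the $Q^0,Q^1,Q^2$ decomposition of $\|\bQ\|_1$ are not needed for Lemma~\ref{Lin:Lm2} itself; I expect them to be used subsequently to quantify how far $\bar\bx^\top\bQ\bar\bx$ can fall short of $Q^*$, thereby upgrading the qualitative bound of Lemma~\ref{Lin:Lm2} into an approximation ratio on the shifted objective $q(\bx)$ introduced in the Proposition.
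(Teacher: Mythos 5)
Your proof is correct and is essentially the paper's own argument: the paper chains $Q^*_{LP2} = \|\bQ\bar{\bx}\|_1 \geq \|\bQ\bx^*\|_1 \geq \bx^{*\top}\bQ\bx^* = Q^*$, which is exactly your feasibility-of-$\bx^*$ step followed by the pointwise domination $\mathbf{1}^\top\bQ\bx^* \geq \bx^{*\top}\bQ\bx^*$ from nonnegativity of $\bQ$ and binarity of $\bx^*$. Your write-up merely makes explicit the justification for the second inequality, which the paper leaves implicit.
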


\begin{proof}
\begin{flalign*}
Q^*_{LP2} &= \max||\bQ\bx||_1\\
& = ||\bQ\bar{\bx}||_1 \\
& \geq ||\bQ\bx^*||_1 \\
& \geq \bx^{*\top}\bQ\bx^*\\
& = Q^*
\end{flalign*}
\end{proof}
We are now ready to state the bound for LP2. 
\begin{Them}
\begin{flalign}
2q(\bx^*) \leq q(\bar{\bx}) \label{linbound:Eq}
\end{flalign}
\end{Them}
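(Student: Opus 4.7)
The plan is to unwind the definition $q(\bx) = \bx^\top\bQ\bx - \|\bQ\|_1$ inside the target inequality. Substituting and cancelling one copy of $\|\bQ\|_1$ from both sides shows that $2q(\bx^*) \leq q(\bar{\bx})$ is equivalent to $2\bx^{*\top}\bQ\bx^* \leq \|\bQ\|_1 + \bar{\bx}^\top\bQ\bar{\bx}$. Writing $Q^* = \bx^{*\top}\bQ\bx^*$ and using the decomposition (\ref{Qnorm:Eq})--(\ref{Q2:Eq}), this collapses to the inequality $2Q^* \leq Q^0 + Q^1 + 2Q^2$.

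The next step chains the two preceding lemmas. Lemma \ref{Lin:Lm2} already gives $Q^* \leq Q^*_{LP2} = \|\bQ\bar{\bx}\|_1$, so doubling yields $2Q^* \leq 2\|\bQ\bar{\bx}\|_1$. To translate this back to the $Q^0,Q^1,Q^2$ decomposition, I would split $\|\bQ\bar{\bx}\|_1 = \sum_{j:\bar{x}_j=1}\bigl(\sum_{i:\bar{x}_i=0}\bQ_{ij} + \sum_{i:\bar{x}_i=1}\bQ_{ij}\bigr)$, recognise the second inner sum as $Q^2$, and observe that the first inner sum (taken over $j$ with $\bar{x}_j = 1$) equals exactly $\tfrac{1}{2}Q^1$ by symmetry of $\bQ$, since the definition of $Q^1$ in (\ref{Q1:Eq}) already counts both orderings $(i\notin S, j\in S)$ and $(i\in S, j\notin S)$. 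Thus $\|\bQ\bar{\bx}\|_1 = \tfrac{1}{2}Q^1 + Q^2$, and hence $2Q^* \leq Q^1 + 2Q^2$.

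Finally, I would invoke Lemma \ref{Lin:Lm1} in the form $\|\bQ\|_1 - Q^2 \geq Q^1$ and rewrite $Q^1 + 2Q^2 = (Q^1 + Q^2) + Q^2 \leq \|\bQ\|_1 + Q^2$, which is exactly the reduced target identified in the first paragraph. Subtracting $2\|\bQ\|_1$ from both ends of the chain $2Q^* \leq Q^1 + 2Q^2 \leq \|\bQ\|_1 + Q^2$ then delivers (\ref{linbound:Eq}).

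The main obstacle is the middle identity $\|\bQ\bar{\bx}\|_1 = \tfrac{1}{2}Q^1 + Q^2$: this is the one place where the symmetry of $\bQ$ is actually used, and where care is needed to avoid a double-count when converting the ordered $(i,j)$ sum with $\bar{x}_i + \bar{x}_j = 1$ into column-sum form. Everything else is routine substitution plus an appeal to the trivial fact $Q^0 \geq 0$ that underpins Lemma \ref{Lin:Lm1}.
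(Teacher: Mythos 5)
Your proposal is correct and follows essentially the same route as the paper: bound $2\bx^{*\top}\bQ\bx^*$ by $2\|\bQ\bar{\bx}\|_1 = Q^1 + 2Q^2$ via Lemma \ref{Lin:Lm2}, then absorb $Q^1 + Q^2$ into $\|\bQ\|_1$ via Lemma \ref{Lin:Lm1}, and subtract $2\|\bQ\|_1$. Your explicit justification of the identity $\|\bQ\bar{\bx}\|_1 = \tfrac{1}{2}Q^1 + Q^2$ using the symmetry of $\bQ$ is a welcome clarification of a step the paper leaves implicit (and where its displayed sum contains a typo, $\bar{x}_i + \bar{x}_i$ for $\bar{x}_i + \bar{x}_j$), but the argument is the same.
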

\begin{proof}
From Lemma (\ref{Lin:Lm2}), we have:
\begin{flalign}
\bx^{*\top}\bQ\bx^* &\leq \frac{1}{2}\sum_{ij}\bQ_{ij}(\bar{x}_i + \bar{x}_i)\\
\bx^{*\top}\bQ\bx^* &\leq \frac{1}{2}Q^1 + \bar{\bx}^\top\bQ\bar{\bx} \quad~~\text{(\ref{Q1:Eq}, \ref{Q2:Eq})}\\
2\bx^{*\top}\bQ\bx^* &\leq Q^1 + 2\bar{\bx}^\top\bQ\bar{\bx}\\
2\bx^{*\top}\bQ\bx^* &\leq ||\bQ||_1 + \bar{\bx}^\top\bQ\bar{\bx} \quad~~\text{Lemma (\ref{Lin:Lm1})} \\
2q(\bx^*) &\leq q(\bar{\bx}) \label{LinThmProof:Eq}
\end{flalign}
\end{proof}
The last statement (\ref{LinThmProof:Eq}) is arrived at by adding $-2||\bQ||_1$ on both sides. 
Since $q(\bx) \leq 0$, $2q(\bx^*) \leq q(\bar{\bx})$ implies that $q(\bar{\bx})$ is a lower bound for $q(\bx^*)$. 
We are therefore guaranteed a lower bound for QBP by solving LP2 and (\ref{LinThmProof:Eq}) provides the tightness of the bound.

\end{document}